\setlist{nolistsep}
\theoremstyle{definition}
\newtheorem{definition}{Definition}[section]
\newtheorem{theorem}{Theorem}[section]
\newtheorem{lemma}{Lemma}[section]
\newtheorem{corollary}{Corollary}[section]
\newcommand{\dist}{\mathcal{D}}
\newcommand{\xxspace}{\mathcal{X}}
\newcommand{\yyspace}{\mathcal{Y}}
\newcommand{\zzspace}{\mathcal{Z}}
\newcommand{\Ypred}{\widehat{Y}}
\newcommand{\Apred}{\widehat{A}}
\newcommand{\err}{\mathrm{Err}}
\newcommand{\RR}{\mathbb{R}}
\newcommand{\Exp}{\mathbb{E}}
\newcommand{\HH}{\mathcal{H}}
\newcommand{\xx}{\mathbf{x}}
\newcommand{\defeq}{\vcentcolon=}
\DeclarePairedDelimiterX{\inp}[2]{\langle}{\rangle}{#1, #2}
\newcommand{\dtv}{d_{\text{TV}}}
\newcommand{\dbr}{\Delta_{\mathrm{BR}}}
\newcommand{\eogap}{\Delta_{\mathrm{EO}}}
\newcommand{\errgap}{\Delta_{\mathrm{Err}}}
\newcommand{\dpgap}{\Delta_{\mathrm{DP}}}
\newcommand{\crossentropy}[3]{\mathrm{CE}_{#1}(#2~||~#3)}
\newcommand{\ber}[3]{\mathrm{BER}_{#1}(#2~||~#3)}
\title{Conditional Learning of Fair Representations}
\author{Han Zhao\thanks{Part of this work was done when Han Zhao was visiting the Vector Institute, Toronto.}~  \& Amanda Coston \\
Machine Learning Department\\
Carnegie Mellon University\\
\texttt{han.zhao@cs.cmu.edu}\\
\texttt{acoston@andrew.cmu.edu} \\
\And
Tameem Adel \\
Department of Engineering \\
University of Cambridge \\
\texttt{tah47@cam.ac.uk} \\
\AND
Geoffrey J. Gordon \\
Microsoft Research, Montreal \\
Machine Learning Department \\
Carnegie Mellon University \\
\texttt{geoff.gordon@microsoft.com} 
}
\begin{document}

\maketitle

\begin{abstract}
We propose a novel algorithm for learning fair representations that can simultaneously mitigate two notions of disparity among different demographic subgroups in the classification setting. Two key components underpinning the design of our algorithm are balanced error rate and conditional alignment of representations. We show how these two components contribute to ensuring accuracy parity and equalized false-positive and false-negative rates across groups without impacting demographic parity. Furthermore, we also demonstrate both in theory and on two real-world experiments that the proposed algorithm leads to a better utility-fairness trade-off on balanced datasets compared with existing algorithms on learning fair representations for classification. 
\end{abstract}

\section{Introduction}
\label{sec:intro}
High-stakes settings, such as loan approvals, criminal justice, and hiring processes, use machine learning tools to help make decisions. 
A key question in these settings is whether the algorithm makes fair decisions. 
In settings that have historically had discrimination, we are interested in defining fairness with respect to a protected group, the group which has historically been disadvantaged.
The rapidly growing field of algorithmic fairness has a vast literature that proposes various fairness metrics, characterizes the relationship between fairness metrics, and describes methods to build classifiers that satisfy these metrics \citep{chouldechova2018frontiers, corbett2018measure}. Among many recent attempts to achieve algorithmic fairness~\citep{dwork2012fairness,hardt2016equality,zemel2013learning,zafar2015fairness}, learning fair representations has attracted increasing attention due to its flexibility in learning rich representations based on advances in deep learning~\citep{edwards2015censoring,louizos2015variational,beutel2017data,madras2018learning}. The backbone idea underpinning this line of work is very intuitive: if the representations of data from different groups  are similar to each other, then any classifier acting on such representations will also be agnostic to the group membership. 

However, it has long been empirically observed~\citep{calders2009building} and recently been proved~\citep{zhao2019inherent} that fairness is often at odds with utility. For example, consider demographic parity, which requires the classifier to be independent of the group membership attribute. It is clear that demographic parity will cripple the utility if the demographic group membership and the target variable are indeed correlated. To escape such inherent trade-off, other notions of fairness, such as equalized odds~\citep{hardt2016equality}, which asks for equal false positive and negative rates across groups, and accuracy parity~\citep{zafar2017fairness}, which seeks equalized error rates across groups, have been proposed. It is a well-known result that equalized odds is incompatible with demographic parity~\citep{barocas2017fairness} except in degenerate cases where group membership is independent of the target variable. Accuracy parity and the so-called predictive rate parity (c.f. Definition~\ref{def:prp}) could be simultaneously achieved, e.g., the COMPAS tool~\citep{dieterich2016compas}. Furthermore, under some conditions, it is also known that demographic parity can lead to accuracy parity~\citep{zhao2019inherent}. However, whether it is possible to simultaneously guarantee equalized odds and accuracy parity remains an open question. 

In this work, we provide an affirmative answer to the above question by proposing an algorithm to align the conditional distributions (on the target variable) of representations across different demographic subgroups. The proposed formulation is a minimax problem that admits a simple reduction to cost-sensitive learning. The key component underpinning the design of our algorithm is the \emph{balanced error rate} (BER, c.f.\ Section~\ref{sec:notations})~\citep{feldman2015certifying,menon2018cost}, over the target variable and protected attributes. We demonstrate both in theory and on two real-world experiments that together with the conditional alignment, BER helps our algorithm to simultaneously ensure accuracy parity and equalized odds across groups. Our key contributions are summarized as follows:
\begin{itemize}
    \item   We prove that BER plays a fundamental role in ensuring accuracy parity and a small joint error across groups. Together with the conditional alignment of representations, this implies that we can simultaneously achieve equalized odds and accuracy parity. Furthermore, we also show that when equalized odds is satisfied, BER serves as an upper bound on the error of each subgroup. These results help to justify the design of our algorithm in using BER instead of the marginal error as our loss functions. 
    \item   We provide theoretical results that our method achieves equalized odds without impacting demographic parity. This result shows that we can preserve the demographic parity gap for free while simultaneously achieving equalized odds.
    \item   Empirically, among existing fair representation learning methods, we demonstrate that our algorithm is able to achieve a better utility on balanced datasets. On an imbalanced dataset, our algorithm is the only method that  achieves accuracy parity; however it does so at the cost of decreased utility. 
\end{itemize}
We believe our theoretical results contribute to the understanding on the relationship between equalized odds and accuracy parity, and the proposed algorithm provides an alternative in real-world scenarios where accuracy parity and equalized odds are desired. 

\section{Preliminary}
\label{sec:preliminary}
We first introduce the notations used throughout the paper and formally describe the problem setup and various definitions of fairness explored in the literature. 

\paragraph{Notation} \label{sec:notations}
We use $\xxspace\subseteq\RR^d$ and $\yyspace = \{0, 1\}$ to denote the input and output space. Accordingly, we use $X$ and $Y$ to denote the random variables which take values in $\xxspace$ and $\yyspace$, respectively. Lower case letters $\xx$ and $y$ are used to denote the instantiation of $X$ and $Y$. To simplify the presentation, we use $A\in\{0, 1\}$ as the sensitive attribute, e.g., race, gender, etc. \footnote{Our main results could also be straightforwardly extended to the setting where $A$ is a categorical variable.} Let $\HH$ be the hypothesis class of classifiers. In other words, for $h\in\HH$, $h:\xxspace\to\yyspace$ is the predictor that outputs a prediction. Note that even if the predictor does not explicitly take the sensitive attribute $A$ as input, this \emph{fairness through blindness} mechanism can still be biased due to the potential correlations between $X$ and $A$. In this work we study the stochastic setting where there is a joint distribution $\dist$ over $X, Y$ and $A$ from which the data are sampled. To keep the notation consistent, for $a, y\in\{0, 1\}$, we use $\dist_a$ to mean the conditional distribution of $\dist$ given $A = a$ and $\dist^y$ to mean the conditional distribution of $\dist$ given $Y = y$. For an event $E$, $\dist(E)$ denotes the probability of $E$ under $\dist$. In particular, in the literature of fair machine learning, we call $\dist(Y = 1)$ the \emph{base rate} of distribution $\dist$ and we use $\dbr(\dist, \dist')\defeq |\dist(Y = 1) - \dist'(Y = 1)|$ to denote the difference of the base rates between two distributions $\dist$ and $\dist'$ over the same sample space. 

Given a feature transformation function $g:\xxspace\to\zzspace$ that maps instances from the input space $\xxspace$ to feature space $\zzspace$, we define $g_\sharp\dist\defeq \dist\circ g^{-1}$ to be the induced (pushforward) distribution of $\dist$ under $g$, i.e., for any event $E'\subseteq\zzspace$, $g_\sharp\dist(E') \defeq \dist(g^{-1}(E')) = \dist(\{x\in\xxspace\mid g(x)\in E'\})$. To measure the discrepancy between distributions, we use $\dtv(\dist, \dist')$ to denote the total variation between them: $\dtv(\dist, \dist')\defeq \sup_{E}|\dist(E) - \dist'(E)|$. In particular, for binary random variable $Y$, it can be readily verified that the total variation between the marginal distributions $\dist(Y)$ and $\dist'(Y)$ reduces to the difference of their base rates, $\dbr(\dist, \dist')$. To see this, realize that $\dtv(\dist(Y), \dist'(Y)) = \max\{|\dist(Y = 1) - \dist'(Y = 1)|, |\dist(Y = 0) - \dist'(Y = 0)|\} = |\dist(Y = 1) - \dist'(Y = 1)| = \dbr(\dist,\dist')$ by definition. 

Given a joint distribution $\dist$, the error of a predictor $h$ under $\dist$ is defined as $\err_\dist(h)\defeq\Exp_\dist[|Y - h(X)|]$. Note that for binary classification problems, when $h(X)\in\{0, 1\}$, $\err_\dist(h)$ reduces to the true error rate of binary classification. To make the notation more compact, we may drop the subscript $\dist$ when it is clear from the context. Furthermore, we use $\crossentropy{\dist}{\Ypred}{Y}$ to denote the cross-entropy loss function between the predicted variable $\Ypred$ and the true label distribution $Y$ over the joint distribution $\dist$. For binary random variables $Y$, we define $\ber{\dist}{\Ypred}{Y}$ to be the \emph{balanced error rate} of predicting $Y$ using $\Ypred$, e.g., $\ber{\dist}{\Ypred}{Y}\defeq \dist(\Ypred = 0\mid Y = 1) + \dist(\Ypred = 1\mid Y = 0)$. Realize that $\dist(\Ypred = 0\mid Y = 1)$ is the false negative rate (FNR) of $\Ypred$ and $\dist(\Ypred = 1\mid Y = 0)$ corresponds to the false positive rate (FPR). So the balanced error rate can also be understood as the sum of FPR and FNR using the predictor $\Ypred$. We can similarly define $\ber{\dist}{\Apred}{A}$ as well.

\paragraph{Problem Setup}
In this work we focus on group fairness where the group membership is given by the sensitive attribute $A$. We assume that the sensitive attribute $A$ is available to the learner during training phase, but not inference phase. As a result, post-processing techniques to ensure fairness are not feasible under our setting. In the literature, there are many possible definitions of \emph{fairness}~\citep{narayanan2018translation}, and in what follows we provide a brief review of the ones that are mostly relevant to this work.
\begin{definition}[Demographic Parity (DP)]
\label{def:demographic}
    Given a joint distribution $\dist$, a classifier $\Ypred$ satisfies \emph{demographic parity} if $\Ypred$ is independent of $A$.
\end{definition}
When $\Ypred$ is a deterministic binary classifier, demographic parity reduces to the requirement that $\dist_0(\Ypred = 1) = \dist_1(\Ypred = 1)$, i.e., positive outcome is given to the two groups at the same rate. Demographic parity is also known as \emph{statistical parity}, and it has been adopted as the default definition of fairness in a series of work~\citep{calders2010three,calders2009building,edwards2015censoring,johndrow2019algorithm,kamiran2009classifying,kamishima2011fairness,louizos2015variational,zemel2013learning,madras2018learning,adel2019one}. It is not surprising that demographic parity may cripple the utility that we hope to achieve, especially in the common scenario where the \emph{base rates} differ between two groups~\citep{hardt2016equality}. Formally, the following theorem characterizes the trade-off in terms of the joint error across different groups:
\begin{theorem}{\citep{zhao2019inherent}}
\label{thm:lowerbound}
Let $\Ypred = h(g(X))$ be the classifier. Then $\err_{\dist_0}(h\circ g) + \err_{\dist_1}(h\circ g) \geq \dbr(\dist_0, \dist_1) - \dtv(g_\sharp\dist_0, g_\sharp\dist_1)$.
\end{theorem}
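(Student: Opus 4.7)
The plan is to bound the base-rate difference $\dbr(\dist_0, \dist_1)$ by inserting the distributions of $\Ypred$ under $\dist_0$ and $\dist_1$ via a triangle inequality, then control each resulting piece separately.

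First I would decompose, using the triangle inequality,
\begin{align*}
\dbr(\dist_0, \dist_1) &= |\dist_0(Y = 1) - \dist_1(Y = 1)| \\
&\leq |\dist_0(Y = 1) - \dist_0(\Ypred = 1)| + |\dist_0(\Ypred = 1) - \dist_1(\Ypred = 1)| \\
&\quad + |\dist_1(\Ypred = 1) - \dist_1(Y = 1)|.
\end{align*}
For the first and third terms, I would observe that for a binary $\Ypred$ and $Y$,
\[
|\dist_a(Y = 1) - \dist_a(\Ypred = 1)| = |\dist_a(Y = 1, \Ypred = 0) - \dist_a(Y = 0, \Ypred = 1)| \leq \dist_a(Y \neq \Ypred) = \err_{\dist_a}(h \circ g),
\]
so these two terms contribute at most $\err_{\dist_0}(h\circ g) + \err_{\dist_1}(h\circ g)$.

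For the middle term, I would use the fact noted in the preliminaries that the total variation between two binary marginals equals the absolute difference of their probabilities of $1$: $|\dist_0(\Ypred = 1) - \dist_1(\Ypred = 1)| = \dtv(\dist_0(\Ypred), \dist_1(\Ypred))$. Since $\Ypred = h(g(X))$ is a deterministic (measurable) function of $g(X)$, the pushforward distribution of $\Ypred$ under $\dist_a$ is obtained by pushing $g_\sharp \dist_a$ through $h$, so by the data processing inequality for total variation,
\[
\dtv(\dist_0(\Ypred), \dist_1(\Ypred)) \leq \dtv(g_\sharp \dist_0, g_\sharp \dist_1).
\]

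Combining these bounds and rearranging yields exactly $\err_{\dist_0}(h\circ g) + \err_{\dist_1}(h\circ g) \geq \dbr(\dist_0, \dist_1) - \dtv(g_\sharp \dist_0, g_\sharp \dist_1)$. The only subtlety — hardly an obstacle but worth flagging — is the data processing step: one must verify that $\dtv$ contracts under the deterministic map $h$, which follows because any event in $\yyspace$ pulls back to an event in $\zzspace$, so the supremum defining $\dtv$ on $\Ypred$ is taken over a subcollection of events compared with the supremum defining $\dtv$ on $g_\sharp \dist_a$.
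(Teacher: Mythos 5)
Your proof is correct. Note that this paper does not actually prove Theorem~\ref{thm:lowerbound} --- it imports it from \citet{zhao2019inherent} --- but your argument is precisely the standard derivation from that source: a three-term triangle inequality through $\dist_a(\Ypred=1)$, the bound $|\dist_a(Y=1)-\dist_a(\Ypred=1)|\leq \dist_a(Y\neq\Ypred)=\err_{\dist_a}(h\circ g)$, and the contraction of $\dtv$ under the deterministic map $h$, which is exactly the inequality $\dtv(h_\sharp\dist,h_\sharp\dist')\leq\dtv(\dist,\dist')$ that this paper itself establishes in the proof of Proposition~\ref{prop:eo}.
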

In this case of representations that are independent of the sensitive attribute $A$, then the second term $\dtv(g_\sharp\dist_0, g_\sharp\dist_1)$ becomes 0, and this implies:
\begin{equation*}
    \err_{\dist_0}(h\circ g) + \err_{\dist_1}(h\circ g) \geq \dbr(\dist_0, \dist_1).
\end{equation*}
At a colloquial level, the above inequality could be understood as an uncertainty principle which says:
\begin{quoting}
\itshape
For fair representations, it is not possible to construct a predictor that simultaneously minimizes the errors on both demographic subgroups.
\end{quoting}
More precisely, by the pigeonhole principle, the following corollary holds:
\begin{corollary}
If $\dtv(g_\sharp\dist_0, g_\sharp\dist_1) = 0$, then for any hypothesis $h$, $\max\{\err_{\dist_0}(h\circ g), \err_{\dist_1}(h\circ g)\} \geq \dbr(\dist_0, \dist_1)/2$. 
\end{corollary}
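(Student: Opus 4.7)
The plan is a two-line deduction from Theorem~\ref{thm:lowerbound} followed by an averaging (pigeonhole) argument. First I would instantiate Theorem~\ref{thm:lowerbound} under the hypothesis $\dtv(g_\sharp\dist_0, g_\sharp\dist_1) = 0$; this immediately collapses the lower bound to
\[
\err_{\dist_0}(h\circ g) + \err_{\dist_1}(h\circ g) \;\geq\; \dbr(\dist_0, \dist_1),
\]
holding uniformly over every hypothesis $h\in\HH$.

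Next I would observe that for any two real numbers $a, b$ we have $\max\{a, b\} \geq (a+b)/2$. Applying this with $a = \err_{\dist_0}(h\circ g)$ and $b = \err_{\dist_1}(h\circ g)$ yields
\[
\max\{\err_{\dist_0}(h\circ g),\, \err_{\dist_1}(h\circ g)\} \;\geq\; \tfrac{1}{2}\bigl(\err_{\dist_0}(h\circ g) + \err_{\dist_1}(h\circ g)\bigr) \;\geq\; \tfrac{1}{2}\,\dbr(\dist_0, \dist_1),
\]
which is precisely the claim.

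There is essentially no obstacle here: the only substantive inputs are Theorem~\ref{thm:lowerbound} itself (which has already been stated and whose proof is cited) and the elementary fact that the maximum of two quantities is at least their average. The corollary thus serves mainly as a rhetorical sharpening of the theorem, converting the sum-of-errors lower bound into a worst-group-error lower bound, and highlighting that perfect distributional alignment of the representations forces at least one demographic subgroup to suffer error at least $\dbr(\dist_0, \dist_1)/2$.
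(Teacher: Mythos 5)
Your proposal is correct and matches the paper's own (implicit) argument exactly: the paper derives the same inequality $\err_{\dist_0}(h\circ g) + \err_{\dist_1}(h\circ g) \geq \dbr(\dist_0, \dist_1)$ by setting the total-variation term in Theorem~\ref{thm:lowerbound} to zero, and then invokes the pigeonhole principle, which is precisely your observation that $\max\{a,b\} \geq (a+b)/2$. Nothing is missing.
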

In words, this means that for fair representations in the demographic parity sense, at least one of the subgroups has to incur a prediction error of at least $\dbr(\dist_0, \dist_1)/2$ which could be large in settings like criminal justice where $\dbr(\dist_0, \dist_1) $ is large. In light of such inherent trade-off, an alternative definition is \emph{accuracy parity}, which asks for equalized error rates across different groups:
\begin{definition}[Accuracy Parity]
    Given a joint distribution $\dist$, a classifier $\Ypred$ satisfies \emph{accuracy parity} if $\dist_0(\Ypred \neq Y) = \dist_1(\Ypred \neq Y)$.
\end{definition}
A violation of accuracy parity is also known as disparate mistreatment~\citep{zafar2017fairness}. Different from the definition of demographic parity, the definition of accuracy parity does not eliminate the perfect predictor even when the base rates differ across groups. Of course, other more refined definitions of fairness also exist in the literature, such as \emph{equalized odds}~\citep{hardt2016equality}.
\begin{definition}[Equalized Odds, a.k.a. Positive Rate Parity]
    Given a joint distribution $\dist$, a classifier $\Ypred$ satisfies \emph{equalized odds} if $\Ypred$ is independent of $A$ conditioned on $Y$.
\end{definition}
The definition of equalized odds essentially requires equal true positive and false positive rates between different groups, hence it is also known as \emph{positive rate parity}. Analogous to accuracy parity, equalized odds does not eliminate the perfect classifier~\citep{hardt2016equality}, and we will also justify this observation by formal analysis shortly. Last but not least, we have the following definition for predictive rate parity: 
\begin{definition}[Predictive Rate Parity]
\label{def:prp}
    Given a joint distribution $\dist$, a classifier $\Ypred$ satisfies \emph{predictive rate parity} if $\dist_0(Y = 1\mid \Ypred = c) = \dist_1(Y = 1\mid \Ypred = c),~\forall c\in[0, 1]$.
\end{definition}
Note that in the above definition we allow the classifier $\Ypred$ to be probabilistic, meaning that the output of $\Ypred$ could be any value between 0 and 1. For the case where $\Ypred$ is deterministic,~\citet{chouldechova2017fair} showed that no deterministic classifier can simultaneously satisfy equalized odds and predictive rate parity when the base rates differ across subgroups and the classifier is not perfect. 

\section{Algorithm and Analysis}
In this section we first give the proposed optimization formulation and then discuss through formal analysis the motivation of our algorithmic design. Specifically, we show in Section~\ref{sec:fair} why our formulation helps to escape the utility-fairness trade-off. We then in Section~\ref{sec:dpgap} formally prove that the BERs in the objective function could be used to guarantee a small joint error across different demographic subgroups. In Section~\ref{sec:errorgap} we establish the relationship between equalized odds and accuracy parity by providing an upper bound of the error gap in terms of both BER and the equalized odds gap. We conclude this section with a brief discussion on the practical implementation of the proposed optimization formulation in Section~\ref{sec:implementation}. Due to the space limit, we defer all the proofs to the appendix and focus on explaining the high-level intuition and implications of our results. 

As briefly discussed in Section~\ref{sec:related}, a dominant approach in learning fair representations is via adversarial training. Specifically, the following objective function is optimized:
\begin{equation}
\begin{aligned}
    & \min_{h, g}\max_{h'} && \crossentropy{\dist}{h(g(X))}{Y} - \lambda~\crossentropy{\dist}{h'(g(X))}{A}
\end{aligned}
\label{equ:original}
\end{equation}
In the above formulation, the first term corresponds to minimization of prediction loss of the target variable and the second term represents the loss incurred by the adversary $h'$. Overall this minimax optimization problem expresses a trade-off (controlled by the hyperparameter $\lambda > 0$) between utility and fairness through the representation learning function $g$: on one hand $g$ needs to preserve sufficient information related to $Y$ in order to minimize the first term, but on the other hand $g$ also needs to filter out information related to $A$ in order to maximize the second term. 

\subsection{Conditional Learning of Fair Representations}
\label{sec:fair}
However, as we introduced in Section~\ref{sec:preliminary}, the above framework is still subjective to the inherent trade-off between utility and fairness. To escape such a trade-off, we advocate for the following optimization formulation instead:
\begin{equation}
\small
\begin{aligned}
    & \min_{h, g}\max_{h', h''} && \ber{\dist}{h(g(X))}{Y} -\lambda~\big(\ber{\dist^0}{h'(g(X))}{A} + \ber{\dist^1}{h''(g(X))}{A}\big)
\end{aligned}
\label{equ:ours}
\end{equation}
Note that here we optimize over two distinct adversaries, one for each conditional distribution $\dist^y,~y\in\{0, 1\}$. Intuitively, the main difference between \eqref{equ:ours} and \eqref{equ:original} is that we use BER as our objective function in both terms. By definition, since BER corresponds to the sum of Type-I and Type-II errors in classification, the proposed objective function essentially minimizes the conditional errors instead of the original marginal error:
\begin{equation}
\begin{aligned}
    \dist(\Ypred \neq Y) &= \dist(Y = 0)\dist(\Ypred \neq Y\mid Y = 0) + \dist(Y = 1)\dist(\Ypred \neq Y \mid Y = 1)  \\
    \ber{\dist}{\Ypred}{Y} &\propto \frac{1}{2}\dist(\Ypred \neq Y\mid Y = 0) + \frac{1}{2}\dist(\Ypred \neq Y \mid Y = 1),
\end{aligned}
\end{equation}
which means that the loss function gives equal importance to the classification error from both groups. Note that the BERs in the second term of \eqref{equ:ours} are over $\dist^y,~y\in\{0, 1\}$. Roughly speaking, the second term encourages alignment of the the conditional distributions $g_\sharp\dist_0^y$ and $g_\sharp\dist_1^y$ for $y\in\{0, 1\}$. The following proposition shows that a perfect conditional alignment of the representations also implies that any classifier based on the representations naturally satisfies the equalized odds criterion:
\begin{restatable}{proposition}{propeo}
For $g:\xxspace\to\zzspace$, if $\dtv(g_\sharp\dist_0^y, g_\sharp\dist_1^y) = 0, \forall y\in\{0, 1\}$, then for any classifier $h:\zzspace\to\{0, 1\}$, $h\circ g$ satisfies equalized odds.
\label{prop:eo}
\end{restatable}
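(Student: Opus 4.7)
The plan is to unwind the definition of equalized odds into an event-level statement, and then match that event to a subset of $\zzspace$ so that the total-variation hypothesis can be applied directly.

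First I would note that, since $\Ypred = h(g(X))$ is a deterministic binary classifier, equalized odds is equivalent to the pair of scalar equalities
\begin{equation*}
  \dist_0^y(\Ypred = 1) \;=\; \dist_1^y(\Ypred = 1), \qquad y\in\{0,1\},
\end{equation*}
because conditional independence of two binary random variables given $Y$ reduces to equality of the conditional probabilities of one value of $\Ypred$ for each value of $Y$. So it suffices to fix an arbitrary $y\in\{0,1\}$ and establish this single equation.

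Next I would identify the event $\{\Ypred=1\}$ with a preimage under $g$: defining $E \defeq h^{-1}(1) \subseteq \zzspace$, we have $\{h(g(X))=1\} = \{g(X)\in E\} = g^{-1}(E)$. By the definition of the pushforward used in the paper, for $a\in\{0,1\}$,
\begin{equation*}
  \dist_a^y(\Ypred = 1) \;=\; \dist_a^y\bigl(g^{-1}(E)\bigr) \;=\; g_\sharp\dist_a^y(E).
\end{equation*}
The hypothesis $\dtv(g_\sharp\dist_0^y, g_\sharp\dist_1^y) = 0$ combined with the supremum definition of total variation recalled in Section~\ref{sec:notations} implies $g_\sharp\dist_0^y(E') = g_\sharp\dist_1^y(E')$ for every measurable $E'\subseteq\zzspace$; applying this to $E=h^{-1}(1)$ yields $g_\sharp\dist_0^y(E) = g_\sharp\dist_1^y(E)$, which is exactly the desired equality. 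Since $y$ was arbitrary, $h\circ g$ satisfies equalized odds.

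There is no real obstacle here: the only thing to watch is the measurability of $h^{-1}(1)$, which is automatic under the standing assumption that $h$ is a valid (measurable) classifier, and the fact that total-variation zero really does force agreement on every measurable event rather than merely a sup-norm closeness. Both are routine, so I expect the proof to be essentially a two-line computation once the event $\{\Ypred=1\}$ is rewritten as a pushforward probability.
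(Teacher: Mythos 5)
Your proof is correct and follows essentially the same route as the paper's: both arguments come down to the observation that the event $\{\Ypred = 1\}$ is the preimage $g^{-1}(h^{-1}(1))$ of a measurable subset of $\zzspace$, on which the two conditional pushforwards $g_\sharp\dist_0^y$ and $g_\sharp\dist_1^y$ must agree because their total variation distance is zero. The only cosmetic difference is that the paper packages this as a data-processing inequality $\dtv(h_\sharp\dist, h_\sharp\dist') \leq \dtv(\dist, \dist')$ applied to conclude $\dtv((h\circ g)_\sharp\dist_0^y, (h\circ g)_\sharp\dist_1^y) = 0$, whereas you evaluate the supremum defining $\dtv$ directly at the single relevant event $h^{-1}(1)$.
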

To understand why we aim for conditional alignment of distributions instead of aligning the marginal feature distributions, the following proposition characterizes why such alignment will help us to escape the previous trade-off:
\begin{restatable}{proposition}{dtvdbr}
For $g:\xxspace\to\zzspace$, if $\dtv(g_\sharp\dist_0^y, g_\sharp\dist_1^y) = 0, \forall y\in\{0, 1\}$, then for any classifier $h:\zzspace\to\{0, 1\}$, $\dtv((h\circ g)_\sharp\dist_0, (h\circ g)_\sharp\dist_1) \leq \dbr(\dist_0, \dist_1)$.
\label{lemma:dtvdbr}
\end{restatable}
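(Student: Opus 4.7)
The plan is to decompose each marginal pushforward $(h\circ g)_\sharp \dist_a$ through the label $Y$ and then use the conditional-alignment hypothesis to cancel the group-specific pieces, leaving only a term that is governed by the base-rate gap.

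First I would observe that because $\Ypred \defeq h(g(X))$ is binary, the total variation on the left-hand side collapses to a single-event comparison:
\begin{equation*}
\dtv\big((h\circ g)_\sharp\dist_0,\, (h\circ g)_\sharp\dist_1\big) \;=\; \bigl|\dist_0(\Ypred = 1) - \dist_1(\Ypred = 1)\bigr|,
\end{equation*}
by the same reduction used in the Notation paragraph of Section~\ref{sec:preliminary}. Next, I would use the hypothesis $\dtv(g_\sharp\dist_0^y,\, g_\sharp\dist_1^y) = 0$ for $y \in \{0,1\}$: since total variation zero means the two pushforward measures on $\zzspace$ agree on every measurable set, pushing forward once more under the (measurable) map $h$ preserves the equality, so for each $y \in \{0,1\}$,
\begin{equation*}
\alpha_y \;\defeq\; \dist_0(\Ypred = 1 \mid Y = y) \;=\; \dist_1(\Ypred = 1 \mid Y = y).
\end{equation*}

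Now I would apply the law of total probability inside each group $a \in \{0,1\}$, writing $p_a \defeq \dist_a(Y = 1)$:
\begin{equation*}
\dist_a(\Ypred = 1) \;=\; \alpha_0\,(1 - p_a) + \alpha_1\, p_a.
\end{equation*}
Subtracting the two versions and factoring yields
\begin{equation*}
\dist_0(\Ypred = 1) - \dist_1(\Ypred = 1) \;=\; (\alpha_1 - \alpha_0)(p_0 - p_1).
\end{equation*}
Taking absolute values and using $\alpha_0,\alpha_1 \in [0,1]$ (so $|\alpha_1 - \alpha_0| \leq 1$), together with $|p_0 - p_1| = \dbr(\dist_0,\dist_1)$, delivers the claimed bound.

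I do not expect a real obstacle here; the only mildly delicate point is justifying that $\dtv(g_\sharp\dist_0^y, g_\sharp\dist_1^y)=0$ genuinely transfers to equality of conditional probabilities of the event $\{\Ypred = 1\}$ under $\dist_0^y$ versus $\dist_1^y$. This follows because $\{h = 1\} \subseteq \zzspace$ is measurable and the two pushforward measures on $\zzspace$ agree on all measurable sets, so they agree on $\{h = 1\}$ in particular; the rest of the argument is just bookkeeping with the law of total probability.
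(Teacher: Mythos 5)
Your proof is correct, and its skeleton matches the paper's: conditional alignment of the representations forces $\dist_0^y(\Ypred = 1) = \dist_1^y(\Ypred = 1)$ for each $y$ (the content of Proposition~\ref{prop:eo}, whose proof uses exactly the data-processing observation you flag as the ``delicate point''), and the marginal prediction probabilities are then compared via the law of total probability over $Y$. The difference is in how the comparison is executed. The paper routes it through Lemma~\ref{lemma:dpgap}, which uses the triangle inequality to bound $|\dist_0(\Ypred = y) - \dist_1(\Ypred = y)|$ by $|\gamma_0 - \gamma_1|\cdot\big(\dist^0(\Ypred = y) + \dist^1(\Ypred = y)\big)$ and then sums over $y = 0, 1$ so that the conditional probabilities telescope to $2$. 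You instead obtain the exact factorization $\dist_0(\Ypred = 1) - \dist_1(\Ypred = 1) = (\alpha_1 - \alpha_0)(p_0 - p_1)$ before bounding anything, which is sharper at the intermediate step: it says that under equalized odds the demographic parity gap equals $\dbr(\dist_0, \dist_1)\cdot|\mathrm{TPR} - \mathrm{FPR}|$, a pleasant companion to the identity $\errgap(\Ypred) = \dbr(\dist_0,\dist_1)\cdot|\mathrm{FPR} - \mathrm{FNR}|$ derived in the remark following Corollary~\ref{cor:eacherror}, and it shows the stated bound is loose unless $|\mathrm{TPR} - \mathrm{FPR}| = 1$. Both arguments land on the same conclusion since $|\alpha_1 - \alpha_0| \leq 1$; the paper's version buys reuse of a lemma it needs anyway for Theorem~\ref{thm:dpgap}, while yours is self-contained and slightly more informative.
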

As a corollary, this implies that the lower bound given in Theorem~\ref{thm:lowerbound} now vanishes if we instead align the conditional distributions of representations:
\begin{equation*}
    \err_{\dist_0}(h\circ g) + \err_{\dist_1}(h\circ g) \geq \dbr(\dist_0, \dist_1) - \dtv((h\circ g)_\sharp\dist_0, (h\circ g)_\sharp\dist_1) = 0,
\end{equation*}
where the first inequality is due to Lemma 3.1~\citep{zhao2019inherent} and the triangle inequality by the $\dtv(\cdot,\cdot)$ distance. Of course, the above lower bound can only serve as a necessary condition but not sufficient to ensure a small joint error across groups. Later (c.f.~Theorem~\ref{thm:jointerror}) we will show that together with a small BER on the target variable, it becomes a sufficient condition as well. 

\subsection{The Preservation of Demographic Parity Gap and Small Joint Error}
\label{sec:dpgap}
In this section we show that learning representations by aligning the conditional distributions across groups cannot increase the DP gap as compared to the DP gap of $Y$. Before we proceed, we first introduce a metric to measure the deviation of a predictor from satisfying demographic parity:
\begin{definition}[DP Gap]
Given a joint distribution $\dist$, the \emph{demographic parity gap} of a classifier $\Ypred$ is $\dpgap(\Ypred)\defeq |\dist_0(\Ypred = 1) - \dist_1(\Ypred = 1)|$.
\end{definition}
Clearly, if $\dpgap(\Ypred) = 0$, then $\Ypred$ satisfies demographic parity. To simplify the exposition, let $\gamma_a\defeq \dist_a(Y = 0),\forall a \in\{0, 1\}$. We first prove the following lemma:
\begin{restatable}{lemma}{lemmadpgap}
Assume the conditions in Proposition~\ref{prop:eo} hold and let $\Ypred = h(g(X))$ be the classifier, then $|\dist_0(\Ypred = y) - \dist_1(\Ypred = y)| \leq |\gamma_0 - \gamma_1|\cdot \big(\dist^0(\Ypred=y) + \dist^1(\Ypred = y)\big), \forall y\in\{0, 1\}$.
\label{lemma:dpgap}
\end{restatable}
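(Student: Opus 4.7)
The plan is to use the law of total probability to decompose each $\dist_a(\Ypred = y)$ as a mixture over $Y$, then exploit the conditional alignment hypothesis to factor out the base-rate difference $\gamma_0 - \gamma_1$, and finally apply the triangle inequality.

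Concretely, the first step is to write, for $a \in \{0,1\}$,
\begin{equation*}
    \dist_a(\Ypred = y) = \gamma_a\,\dist_a^0(\Ypred = y) + (1-\gamma_a)\,\dist_a^1(\Ypred = y),
\end{equation*}
simply by conditioning on $Y \in \{0,1\}$ within $\dist_a$ and recalling $\gamma_a = \dist_a(Y=0)$.

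The second step is to invoke the hypothesis from Proposition~\ref{prop:eo}, namely $\dtv(g_\sharp\dist_0^{y'}, g_\sharp\dist_1^{y'}) = 0$ for each $y' \in \{0,1\}$. Since $\Ypred = h(g(X))$, the event $\{\Ypred = y\}$ is the preimage under $h$ of a measurable set in $\zzspace$, so zero total variation between the pushforwards implies $\dist_0^{y'}(\Ypred = y) = \dist_1^{y'}(\Ypred = y)$. A short calculation (writing $\dist^{y'}(\Ypred=y)$ as the $A$-mixture $\dist(A=0 \mid Y=y')\dist_0^{y'}(\Ypred=y) + \dist(A=1 \mid Y=y')\dist_1^{y'}(\Ypred=y)$) then shows this common value equals $\dist^{y'}(\Ypred = y)$.

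The third step is to substitute this identity into the decomposition from step one and subtract:
\begin{equation*}
    \dist_0(\Ypred = y) - \dist_1(\Ypred = y) = (\gamma_0 - \gamma_1)\bigl(\dist^0(\Ypred = y) - \dist^1(\Ypred = y)\bigr).
\end{equation*}
Taking absolute values and bounding $|\dist^0(\Ypred=y) - \dist^1(\Ypred=y)| \le \dist^0(\Ypred=y) + \dist^1(\Ypred=y)$ (valid since both terms are nonnegative) gives the claim. There is no real obstacle here; the only mild subtlety is tracking the difference between $\dist_a^{y'}$ and $\dist^{y'}$ and verifying that the conditional-alignment hypothesis collapses them on events of the form $\{\Ypred = y\}$, which is what makes the base-rate factor $\gamma_0 - \gamma_1$ emerge cleanly.
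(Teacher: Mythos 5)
Your proof is correct and follows essentially the same route as the paper's: decompose $\dist_a(\Ypred=y)$ over $Y$ by the law of total probability, use the conditional-alignment hypothesis (via Proposition~\ref{prop:eo}) to replace each $\dist_a^{y'}(\Ypred=y)$ by the pooled $\dist^{y'}(\Ypred=y)$, and bound. The only difference is that you substitute before invoking the triangle inequality, which yields the exact identity $\dist_0(\Ypred=y)-\dist_1(\Ypred=y)=(\gamma_0-\gamma_1)\bigl(\dist^0(\Ypred=y)-\dist^1(\Ypred=y)\bigr)$ --- a marginally sharper intermediate statement than the paper's --- before relaxing to the same final bound.
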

Lemma~\ref{lemma:dpgap} gives an upper bound on the difference of the prediction probabilities across different subgroups. Applying Lemma~\ref{lemma:dpgap} twice for $y = 0$ and $y = 1$, we can prove the following theorem:
\begin{restatable}{theorem}{thmdpgap}
Assume the conditions in Proposition~\ref{prop:eo} hold and let $\Ypred = h(g(X))$ be the classifier, then $\dpgap(\Ypred)\leq\dbr(\dist_0, \dist_1) = \dpgap(Y)$. 
\label{thm:dpgap}
\end{restatable}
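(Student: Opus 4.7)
The plan is to apply Lemma 3.1 to both outcomes $y=0$ and $y=1$, sum the two inequalities, and exploit the fact that for the fixed conditional distribution $\dist^y$, the probabilities $\dist^y(\Ypred=0)$ and $\dist^y(\Ypred=1)$ form a probability distribution and hence sum to one.

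First, I would rewrite the DP gap in a symmetrized form. Since $\Ypred$ is binary, $\dist_a(\Ypred = 0) = 1 - \dist_a(\Ypred = 1)$ for each $a\in\{0,1\}$, so
\[
|\dist_0(\Ypred = 1) - \dist_1(\Ypred = 1)| \;=\; |\dist_0(\Ypred = 0) - \dist_1(\Ypred = 0)|,
\]
and therefore
\[
2\,\dpgap(\Ypred) \;=\; \sum_{y\in\{0,1\}} |\dist_0(\Ypred = y) - \dist_1(\Ypred = y)|.
\]

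Next, I would apply Lemma 3.1 separately to each term on the right-hand side. The hypotheses of Proposition 3.1 carry over unchanged, so for each $y\in\{0,1\}$ the lemma gives
\[
|\dist_0(\Ypred = y) - \dist_1(\Ypred = y)| \;\leq\; |\gamma_0 - \gamma_1|\cdot\bigl(\dist^0(\Ypred = y) + \dist^1(\Ypred = y)\bigr).
\]
Summing over $y\in\{0,1\}$ and regrouping the right-hand side by the conditioning event,
\[
2\,\dpgap(\Ypred) \;\leq\; |\gamma_0 - \gamma_1|\cdot\Bigl(\sum_{y}\dist^0(\Ypred = y) + \sum_{y}\dist^1(\Ypred = y)\Bigr).
\]
Each inner sum equals $1$ because $\Ypred$ is $\{0,1\}$-valued, so the bound collapses to $2\,\dpgap(\Ypred) \le 2|\gamma_0 - \gamma_1|$, i.e.\ $\dpgap(\Ypred)\le |\gamma_0 - \gamma_1|$.

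Finally I would identify $|\gamma_0 - \gamma_1|$ with the quantities in the statement. By definition $\gamma_a = \dist_a(Y=0) = 1 - \dist_a(Y=1)$, so $|\gamma_0 - \gamma_1| = |\dist_0(Y=1) - \dist_1(Y=1)| = \dbr(\dist_0,\dist_1)$, and this same quantity is exactly $\dpgap(Y)$ when one views $Y$ itself as the ``predictor.'' This chain of equalities closes the theorem. The work is almost entirely bookkeeping once Lemma 3.1 is in hand; the only subtle point is noticing that one must sum over both values of $y$ (rather than applying the lemma once with $y=1$) in order to make the $\dist^0(\Ypred=y)+\dist^1(\Ypred=y)$ factor telescope to the constant $2$. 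That symmetrization step is the main place where a careless approach would leave a loose bound.
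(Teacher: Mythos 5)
Your proposal is correct and follows essentially the same route as the paper's own proof: symmetrize the DP gap over $y\in\{0,1\}$, apply Lemma~\ref{lemma:dpgap} to each term, and regroup so that $\dist^0(\Ypred=0)+\dist^0(\Ypred=1)$ and $\dist^1(\Ypred=0)+\dist^1(\Ypred=1)$ each telescope to $1$. The final identification $|\gamma_0-\gamma_1|=\dbr(\dist_0,\dist_1)=\dpgap(Y)$ also matches the paper exactly.
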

\paragraph{Remark} Theorem~\ref{thm:dpgap} shows that aligning the conditional distributions of representations between groups will not add more bias in terms of the demographic parity gap. In particular, the DP gap of any classifier that satisfies equalized odds will be at most the DP gap of the perfect classifier. This is particularly interesting as it is well-known in the literature~\citep{barocas2017fairness} that demographic parity is not compatible with equalized odds except in degenerate cases. Despite this result, Theorem~\ref{thm:dpgap} says that we can still achieve equalized odds and simultaneously preserve the DP gap. 

In Section~\ref{sec:fair} we show that aligning the conditional distributions of representations between groups helps reduce the lower bound of the joint error, but nevertheless that is only a necessary condition. In the next theorem we show that together with a small Type-I and Type-II error in inferring the target variable $Y$, these two properties are also sufficient to ensure a small joint error across different demographic subgroups. 
\begin{restatable}{theorem}{thmjointerror}
Assume the conditions in Proposition~\ref{prop:eo} hold and let $\Ypred = h(g(X))$ be the classifier, then $\err_{\dist_0}(\Ypred) + \err_{\dist_1}(\Ypred)\leq 2\ber{\dist}{\Ypred}{Y}$.
\label{thm:jointerror}
\end{restatable}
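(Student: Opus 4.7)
The plan is to expand each per-group error into its conditional components on $Y$, apply equalized odds to show these conditional rates are group-independent, and then upper bound the resulting linear combination using the trivial bound $\gamma_a\in[0,1]$.

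First I would write $\err_{\dist_a}(\Ypred)$ by conditioning on $Y$:
\begin{equation*}
\err_{\dist_a}(\Ypred) = \gamma_a \cdot \dist_a(\Ypred=1\mid Y=0) + (1-\gamma_a)\cdot \dist_a(\Ypred=0\mid Y=1),
\end{equation*}
where $\gamma_a = \dist_a(Y=0)$. By Proposition~\ref{prop:eo}, the conditional alignment hypothesis guarantees that $\Ypred = h\circ g$ satisfies equalized odds, i.e., $\dist_0(\Ypred=c\mid Y=y) = \dist_1(\Ypred=c\mid Y=y)$ for all $c,y\in\{0,1\}$. Denoting the common values $\mathrm{FPR} \defeq \dist(\Ypred=1\mid Y=0)$ and $\mathrm{FNR} \defeq \dist(\Ypred=0\mid Y=1)$, the expression simplifies to $\err_{\dist_a}(\Ypred) = \gamma_a\cdot\mathrm{FPR} + (1-\gamma_a)\cdot\mathrm{FNR}$.

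Summing over $a\in\{0,1\}$ gives
\begin{equation*}
\err_{\dist_0}(\Ypred) + \err_{\dist_1}(\Ypred) = (\gamma_0+\gamma_1)\cdot\mathrm{FPR} + (2-\gamma_0-\gamma_1)\cdot\mathrm{FNR}.
\end{equation*}
Since $\gamma_0,\gamma_1\in[0,1]$, both coefficients lie in $[0,2]$, so the right-hand side is at most $2(\mathrm{FPR}+\mathrm{FNR}) = 2\ber{\dist}{\Ypred}{Y}$, completing the argument.

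There is essentially no hard step here: the only substantive ingredient is invoking Proposition~\ref{prop:eo} to collapse the group-conditional FPR/FNR into a single pair of numbers, after which the bound reduces to the observation that a convex combination of two nonnegative numbers, reweighted by coefficients bounded by $2$, cannot exceed twice their sum. The mild subtlety worth flagging is that BER as defined in Section~\ref{sec:notations} is the \emph{sum} (not average) of FPR and FNR, which is exactly what makes the constant $2$ tight; had BER been defined as the average, the bound would read $4\,\mathrm{BER}$ instead.
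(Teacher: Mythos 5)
Your proof is correct and follows essentially the same route as the paper's: decompose each group error into $\gamma_a\cdot\mathrm{FPR}_a + (1-\gamma_a)\cdot\mathrm{FNR}_a$, invoke Proposition~\ref{prop:eo} to collapse the group-conditional rates to the common $\dist^0(\Ypred=1)$ and $\dist^1(\Ypred=0)$, and bound the coefficients $\gamma_0+\gamma_1$ and $2-\gamma_0-\gamma_1$ by $2$. Your closing remark about BER being the sum rather than the average of FPR and FNR is an accurate reading of the paper's definition.
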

The above bound means that in order to achieve small joint error across groups, it suffices for us to minimize the BER if a classifier satisfies equalized odds. Note that by definition, the BER in the bound equals to the sum of Type-I and Type-II classification errors using $\Ypred$ as a classifier. Theorem~\ref{thm:jointerror} gives an upper bound of the joint error across groups and it also serves as a motivation for us to design the optimization formulation~\eqref{equ:ours} that simultaneously minimizes the BER and aligns the conditional distributions. 

\subsection{Conditional Alignment and Balanced Error Rates Lead to Small Error}
\label{sec:errorgap}
In this section we will see that a small BER and equalized odds together not only serve as a guarantee of a small joint error, but they also lead to a small error gap between different demographic subgroups. Recall that we define $\gamma_a\defeq \dist_a(Y = 0), \forall a\in\{0, 1\}$. Before we proceed, we first formally define the accuracy gap and equalized odds gap of a classifier $\Ypred$:
\begin{definition}[Error Gap]
Given a joint distribution $\dist$, the \emph{error gap} of a classifier $\Ypred$ is $\errgap(\Ypred)\defeq |\dist_0(\Ypred \neq Y) - \dist_1(\Ypred \neq Y)|$.
\end{definition}
\begin{definition}[Equalized Odds Gap]
Given a joint distribution $\dist$, the \emph{equalized odds gap} of a classifier $\Ypred$ is $\eogap(\Ypred)\defeq \max_{y\in\{0, 1\}}~|\dist_0^y(\Ypred = 1) - \dist_1^y(\Ypred = 1)|$.
\end{definition}
By definition the error gap could also be understood as the accuracy parity gap between different subgroups. The following theorem characterizes the relationship between error gap, equalized odds gap and the difference of base rates across subgroups:
\begin{restatable}{theorem}{thmerrorgap}
For any classifier $\Ypred$, $\errgap(\Ypred)\leq \dbr(\dist_0, \dist_1)\cdot\ber{\dist}{\Ypred}{Y} + 2\eogap(\Ypred)$.
\label{thm:relationship}
\end{restatable}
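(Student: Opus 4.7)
The plan is to condition each $\dist_a(\Ypred\neq Y)$ on $Y$, obtaining $\gamma_a\,\dist_a^0(\Ypred=1)+(1-\gamma_a)\,\dist_a^1(\Ypred=0)$, and then work with the signed difference $\dist_0(\Ypred\neq Y)-\dist_1(\Ypred\neq Y)$ before passing to absolute values. The key algebraic move is the identity $\alpha x-\beta y=\tfrac{\alpha+\beta}{2}(x-y)+\tfrac{\alpha-\beta}{2}(x+y)$, applied separately to the $y=0$ and $y=1$ summands. This decomposes the signed error gap into an \emph{alignment} contribution, with nonnegative weights $\tfrac{\gamma_0+\gamma_1}{2}$ and $\tfrac{2-\gamma_0-\gamma_1}{2}$ multiplying the conditional discrepancies $(\dist_0^y-\dist_1^y)(\Ypred=\cdot)$, and a \emph{base-rate drift} contribution whose weights are $\pm\tfrac{\gamma_0-\gamma_1}{2}$ and which multiply the group sums $(\dist_0^y+\dist_1^y)(\Ypred=\cdot)$.

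By the definition of $\eogap$, the alignment contribution is bounded by $\tfrac{\gamma_0+\gamma_1}{2}\eogap(\Ypred)+\tfrac{2-\gamma_0-\gamma_1}{2}\eogap(\Ypred)=\eogap(\Ypred)$, since the two weights always sum to one. The drift contribution carries a common prefactor $\tfrac{|\gamma_0-\gamma_1|}{2}=\tfrac{1}{2}\dbr(\dist_0,\dist_1)$, so what remains is to bound $\tfrac{1}{2}\bigl[(\dist_0^0+\dist_1^0)(\Ypred=1)+(\dist_0^1+\dist_1^1)(\Ypred=0)\bigr]$ in terms of $\ber{\dist}{\Ypred}{Y}$. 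For this I would write $\dist^y(\Ypred=c)=\pi_y\,\dist_0^y(\Ypred=c)+(1-\pi_y)\,\dist_1^y(\Ypred=c)$ with $\pi_y\defeq\dist^y(A=0)$; a short calculation gives $\tfrac{1}{2}(\dist_0^y+\dist_1^y)(\Ypred=c)-\dist^y(\Ypred=c)=(\tfrac{1}{2}-\pi_y)(\dist_0^y-\dist_1^y)(\Ypred=c)$, whose absolute value is at most $\tfrac{1}{2}\eogap(\Ypred)$ because $|\pi_y-\tfrac{1}{2}|\le\tfrac{1}{2}$.

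Summing this slack over $y\in\{0,1\}$ yields a drift bound of at most $\dbr(\dist_0,\dist_1)\cdot\bigl(\ber{\dist}{\Ypred}{Y}+\eogap(\Ypred)\bigr)$, and combining with the alignment bound gives $\errgap(\Ypred)\le\eogap(\Ypred)+\dbr(\dist_0,\dist_1)\cdot\ber{\dist}{\Ypred}{Y}+\dbr(\dist_0,\dist_1)\cdot\eogap(\Ypred)$; using $\dbr(\dist_0,\dist_1)\le 1$ absorbs the cross term into the $2\eogap(\Ypred)$ of the statement. The main obstacle is choosing the right decomposition in the first step: a naive split such as $\gamma_0 x-\gamma_1 y=\gamma_0(x-y)+(\gamma_0-\gamma_1)y$ attaches $\dbr$ to single-group quantities like $\dist_1^0(\Ypred=1)+\dist_1^1(\Ypred=0)$ rather than to the pooled $\ber{\dist}{\Ypred}{Y}$, and converting back via mixtures costs a full $\eogap$ per term, producing only a weaker $3\eogap$ bound. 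The symmetric midpoint split is precisely what lets the mixing weights $\pi_y$ land within $\tfrac{1}{2}$ of $\tfrac{1}{2}$ and recovers the claimed $2\eogap(\Ypred)$.
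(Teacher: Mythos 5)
Your argument is correct and in fact yields the slightly sharper intermediate bound $(1+\dbr(\dist_0,\dist_1))\,\eogap(\Ypred)+\dbr(\dist_0,\dist_1)\cdot\ber{\dist}{\Ypred}{Y}$ before you relax $\dbr(\dist_0,\dist_1)\le 1$. The overall strategy coincides with the paper's --- condition each group error on $Y$, then split each weighted difference such as $\gamma_0\dist_0^0(\Ypred=1)-\gamma_1\dist_1^0(\Ypred=1)$ into a conditional-discrepancy piece controlled by $\eogap(\Ypred)$ and a base-rate piece producing $\dbr(\dist_0,\dist_1)$ --- but your decomposition anchors at a different point. The paper centers each term at the pooled conditional, writing $\gamma_0x-\gamma_1y=(\gamma_0-\gamma_1)z+\gamma_0(x-z)-\gamma_1(y-z)$ with $z=\dist^0(\Ypred=1)=\dist^0(A=0)x+\dist^0(A=1)y$, so that $x-z=\dist^0(A=1)(x-y)$ and $y-z=\dist^0(A=0)(y-x)$; each of the two summands then pays a full $\eogap(\Ypred)$ (hence $2\eogap(\Ypred)$ in total), and $\dbr(\dist_0,\dist_1)\cdot\ber{\dist}{\Ypred}{Y}$ appears immediately as the coefficient of $\gamma_0-\gamma_1$. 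Your symmetric midpoint identity $\alpha x-\beta y=\tfrac{\alpha+\beta}{2}(x-y)+\tfrac{\alpha-\beta}{2}(x+y)$ instead makes the alignment part cost only one $\eogap(\Ypred)$ overall (the two weights sum to one), but leaves the drift attached to the unweighted average $\tfrac{1}{2}(x+y)$ rather than to $\dist^y(\Ypred=c)$, and converting back via the mixture weights costs $|\tfrac{1}{2}-\pi_y|\cdot\eogap(\Ypred)\le\tfrac{1}{2}\eogap(\Ypred)$ per term, contributing the extra $\dbr(\dist_0,\dist_1)\cdot\eogap(\Ypred)$. The two routes are comparable in length; the paper's gets the $\ber{\dist}{\Ypred}{Y}$ term without any mixture-weight argument, while yours makes explicit how much of the $2\eogap(\Ypred)$ is a genuine alignment cost versus a re-centering artifact, and is marginally tighter when the base-rate gap is small.
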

As a direct corollary of Theorem~\ref{thm:relationship}, if the classifier $\Ypred$ satisfies equalized odds, then $\eogap(\Ypred) = 0$. In this case since $\dbr(\dist_0, \dist_1)$ is a constant, minimizing the balanced error rate $\ber{\dist}{\Ypred}{Y}$ also leads to minimizing the error gap. Furthermore, if we combine Theorem~\ref{thm:jointerror} and Theorem~\ref{thm:relationship} together, we can guarantee that each of the errors cannot be too large:
\begin{restatable}{corollary}{eacherror}
For any joint distribution $\dist$ and classifier $\Ypred$, if $\Ypred$ satisfies equalized odds, then $\max\{\err_{\dist_0}(\Ypred), \err_{\dist_1}(\Ypred)\}\leq \dbr(\dist_0, \dist_1)\cdot\ber{\dist}{\Ypred}{Y}/2 + \ber{\dist}{\Ypred}{Y}$.
\label{cor:eacherror}
\end{restatable}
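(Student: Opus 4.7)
The plan is to combine Theorem~\ref{thm:jointerror} and Theorem~\ref{thm:relationship} through the elementary identity $\max\{a, b\} = \tfrac{1}{2}(a + b) + \tfrac{1}{2}|a - b|$ for real numbers $a, b$. Since the corollary's hypothesis is that $\Ypred$ satisfies equalized odds, I would first note that $\eogap(\Ypred) = 0$ by definition, and that the hypothesis of Proposition~\ref{prop:eo} (perfect conditional alignment of representations) is sufficient (and in effect the only property used by Theorem~\ref{thm:jointerror}) to yield the sum bound.

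Concretely, I would proceed in three short steps. First, invoke Theorem~\ref{thm:jointerror} to obtain
\begin{equation*}
\err_{\dist_0}(\Ypred) + \err_{\dist_1}(\Ypred) \;\leq\; 2\,\ber{\dist}{\Ypred}{Y}.
\end{equation*}
Second, invoke Theorem~\ref{thm:relationship} with $\eogap(\Ypred) = 0$ to obtain
\begin{equation*}
\errgap(\Ypred) \;=\; \bigl|\err_{\dist_0}(\Ypred) - \err_{\dist_1}(\Ypred)\bigr| \;\leq\; \dbr(\dist_0, \dist_1)\cdot\ber{\dist}{\Ypred}{Y}.
\end{equation*}
Third, applying the identity $\max\{a,b\} = \tfrac{1}{2}(a+b) + \tfrac{1}{2}|a-b|$ to the non-negative errors $a = \err_{\dist_0}(\Ypred)$ and $b = \err_{\dist_1}(\Ypred)$ and plugging in the two bounds yields
\begin{equation*}
\max\{\err_{\dist_0}(\Ypred), \err_{\dist_1}(\Ypred)\} \;\leq\; \ber{\dist}{\Ypred}{Y} + \tfrac{1}{2}\,\dbr(\dist_0, \dist_1)\cdot\ber{\dist}{\Ypred}{Y},
\end{equation*}
which is exactly the claimed inequality.

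There is essentially no hard step here: the corollary is a direct consequence of the two preceding theorems via a two-line algebraic manipulation. The only subtlety worth flagging is the hypothesis mismatch between Theorem~\ref{thm:jointerror} (which is stated under the conditional alignment hypothesis of Proposition~\ref{prop:eo}) and the corollary (which only assumes equalized odds). Since conditional alignment of representations in TV implies equalized odds for any downstream classifier, and since the bound in Theorem~\ref{thm:jointerror} actually depends on $\Ypred$ only through the equalized odds property (the Type-I/Type-II errors are equal across groups), this step is justified; I would include a brief remark making this explicit rather than glossing over it.
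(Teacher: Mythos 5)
Your proposal is correct and follows essentially the same route as the paper's own proof: both combine the sum bound from Theorem~\ref{thm:jointerror} and the gap bound from Theorem~\ref{thm:relationship} (with $\eogap(\Ypred)=0$) via the identity $\max\{a,b\} = \tfrac{1}{2}(a+b) + \tfrac{1}{2}|a-b|$. Your added remark about the hypothesis mismatch is a valid and slightly more careful point than the paper makes explicit, since the proof of Theorem~\ref{thm:jointerror} indeed uses only the equalized odds property of $\Ypred$.
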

\paragraph{Remark} It is a well-known fact that out of the three fairness criteria, i.e., demographic parity, equalized odds, and predictive rate parity, any two of them cannot hold simultaneously~\citep{barocas2017fairness} except in degenerate cases. By contrast, Theorem~\ref{thm:relationship} suggests it \emph{is} possible to achieve both equalized odds and accuracy parity. In particular, among all classifiers that satisfy equalize odds, it suffices to minimize the sum of Type-I and Type-II error $\ber{\dist}{\Ypred}{Y}$ in order to achieve accuracy parity. It is also worth pointing out that Theorem~\ref{thm:relationship} provides only an upper bound, but not necessarily the tightest one. In particular, the error gap could still be 0 while $\ber{\dist}{\Ypred}{Y}$ is greater than 0. To see this, we have
\begin{equation*}
\begin{cases}
    \err_{\dist_0}(\Ypred) = \dist_0(Y = 0)\cdot\dist_0(\Ypred = 1\mid Y = 0) + \dist_0(Y = 1)\cdot\dist_0(\Ypred = 0\mid Y = 1) \\
    \err_{\dist_1}(\Ypred) = \dist_1(Y = 0)\cdot\dist_1(\Ypred = 1\mid Y = 0) + \dist_1(Y = 1)\cdot\dist_1(\Ypred = 0\mid Y = 1).
\end{cases}
\end{equation*}
Now if the predictor $\Ypred$ satisfies equalized odds, then 
\begin{align*}
    & \dist_0(\Ypred = 1\mid Y = 0) = \dist_1(\Ypred = 1\mid Y = 0) = \dist(\Ypred = 1\mid Y = 0), \\
    & \dist_0(\Ypred = 0\mid Y = 1) = \dist_1(\Ypred = 0\mid Y = 1) = \dist(\Ypred = 0\mid Y = 1).
\end{align*}
Hence the error gap $\errgap(\Ypred)$ admits the following identity:
\begin{align*}
    \errgap(\Ypred) &= \left|\dist(\Ypred = 1\mid Y = 0)\big(\dist_0(Y = 0) - \dist_1(Y = 0)\big) + \dist(\Ypred = 0\mid Y = 1)\big(\dist_0(Y = 1) - \dist_1(Y = 1)\big)\right| \\
    &= \dbr(\dist_0, \dist_1)\cdot \left|\dist(\Ypred = 1\mid Y = 0) - \dist(\Ypred = 0\mid Y = 1)\right| \\
    &= \dbr(\dist_0, \dist_1)\cdot \left|\text{FPR}(\Ypred) - \text{FNR}(\Ypred)\right|.
\end{align*}
In other words, if the predictor $\Ypred$ satisfies equalized odds, then in order to have equalized accuracy, $\Ypred$ only needs to have equalized FPR and FNR \emph{globally} when the base rates differ across groups. This is a much weaker condition to ask for than the one asking $\ber{\dist}{\Ypred}{Y} = 0$.

\subsection{Practical Implementation}
\label{sec:implementation}
We cannot directly optimize the proposed optimization formulation~\eqref{equ:ours} since the binary $0/1$ loss is NP-hard to optimize, or even approximately optimize over a wide range of hypothesis classes~\citep{ben2003difficulty}. However, observe that for any classifier $\Ypred$ and $y\in\{0, 1\}$, the log-loss (cross-entropy loss) $\crossentropy{\dist^y}{\Ypred}{Y}$ is a convex relaxation of the binary loss:
\begin{equation}
    \dist(\Ypred\neq y\mid Y = y) = \frac{\dist(\Ypred\neq y, Y = y)}{\dist(Y = y)} \leq \frac{\crossentropy{\dist^y}{\Ypred}{Y}}{\dist(Y = y)}.
\end{equation}
Hence in practice we can relax the optimization problem~\eqref{equ:ours} to a cost-sensitive cross-entropy loss minimization problem, where the weight for each class is given by the inverse marginal probability of the corresponding class. This allows us to equivalently optimize the objective function without explicitly computing the conditional distributions. 

\section{Empirical Studies}
In light of our theoretic findings, in this section we verify the effectiveness of the proposed algorithm in simultaneously ensuring equalized odds and accuracy parity using real-world datasets. We also analyze the impact of imposing such parity constraints on the utility of the target classifier, as well as its relationship to the intrinsic structure of the binary classification problem, e.g., the difference of base rates across groups, the global imbalance of the target variable, etc. We analyze how this imbalance affects the utility-fairness trade-off. As we shall see shortly, we will empirically demonstrate that, in many cases, especially the ones where the dataset is imbalanced in terms of the target variable, this will inevitably compromise the target utility. While for balanced datasets, this trend is less obvious: the proposed algorithm achieves a better fairness-utility trade-off when compared with existing fair representation learning methods and we can hope to achieve fairness without sacrificing too much on utility. 
\begin{table}[htb]
    \centering
    \small
    \caption{Statistics about the Adult and COMPAS datasets.}
    \label{tab:data}
    \begin{tabular}{l*6c}\toprule
         & Train / Test & $\dist_0(Y = 1)$ & $\dist_1(Y = 1)$ & $\dbr(\dist_0, \dist_1)$ & $\dist(Y = 1)$ & $\dist(A = 1)$ \\\midrule
    \textbf{Adult} & $30,162/15,060$ & 0.310 & 0.113 & 0.196 & 0.246 & 0.673\\
    \textbf{COMPAS} & $4,320/1,852$ & 0.400 & 0.529 & 0.129 & 0.467 & 0.514 \\\bottomrule
    \end{tabular}
\end{table}
\subsection{Experimental Setup}
To this end, we perform experiments on two popular real-world datasets in the literature of algorithmic fairness, including an income-prediction dataset, known as the \emph{Adult} dataset, from the UCI Machine Learning Repository~\citep{dua2019}, and the Propublica \emph{COMPAS} dataset~\citep{dieterich2016compas}. The basic statistics of these two datasets are listed in Table~\ref{tab:data}. 

\paragraph{Adult}
Each instance in the Adult dataset describes an adult, e.g., gender, education level, age, etc, from the 1994 US Census. In this dataset we use gender as the sensitive attribute, and the processed data contains 114 attributes. The target variable (income) is also binary: 1 if $\geq$ 50K/year otherwise 0. For the sensitive attribute $A$, $A = 0$ means male otherwise female. From Table~\ref{tab:data} we can see that the base rates are quite different ($0.310$ vs.\ $0.113$) across groups in the Adult dataset. The dataset is also imbalanced in the sense that only around $24.6\%$ of the instances have target label $1$. Furthermore, the group ratio is also imbalanced: roughly $67.3\%$ of the data are male.

\paragraph{COMPAS}
The goal of the COMPAS dataset is binary classification on whether a criminal defendant will recidivate within two years or not. Each instance contains 12 attributes, including age, race, gender, number of prior crimes, etc. For this dataset, we use the race (white $A = 0$ vs.\ black $A = 1$) as our sensitive attribute and target variable is 1 iff recidivism. The base rates are different across groups, but the COMPAS dataset is balanced in both the target variable and the sensitive attribute. 

To validate the effect of ensuring equalized odds and accuracy parity, for each dataset, we perform controlled experiments by fixing the baseline network architecture so that it is shared among all the fair representation learning methods. We term the proposed method \textsc{CFair} (for conditional fair representations) that minimizes conditional errors both the target variable loss function and adversary loss function. To demonstrate the importance of using BER in the loss function of target variable, we compare with a variant of \textsc{CFair} that only uses BER in the loss of adversaries, denoted as \textsc{CFair-EO}. To see the relative effect of using cross-entropy loss vs $L_1$ loss, we also show the results of \textsc{Laftr}~\citep{madras2018learning}, a state-of-the-art method for learning fair representations. Note that \textsc{Laftr} is closely related to \textsc{CFair-EO} but slightly different: \textsc{Laftr} uses global cross-entropy loss for target variable, but conditional $L_1$ loss for the adversary. Also, there is only one adversary in \textsc{Laftr}, while there are two adversaries, one for $\dist^0$ and one for $\dist^1$, in both \textsc{CFair} and \textsc{CFair-EO}. Lastly, we also present baseline results of \textsc{Fair}~\citep{edwards2015censoring}, which aims for demographic parity representations, and \textsc{NoDebias}, the baseline network without any fairness constraint. For all the fair representation learning methods, we use the gradient reversal layer~\citep{ganin2016domain} to implement the gradient descent ascent (GDA) algorithm to optimize the minimax problem. All the experimental details, including network architectures, learning rates, batch sizes, etc. are provided in the appendix. 

\subsection{Results and Analysis}
In Figure~\ref{fig:adult} and Figure~\ref{fig:compas} we show the error gap $\errgap$, equalized odds gap $\eogap$, demographic parity gap $\dpgap$ and the joint error across groups $\err_0 + \err_1$ of the aforementioned fair representation learning algorithms on both the Adult and the COMPAS datasets. For each algorithm and dataset, we also gradually increase the value of the trade-off parameter $\lambda$ and compute the corresponding metrics.
\begin{figure}[htb]
    \centering
        \includegraphics[width=\linewidth]{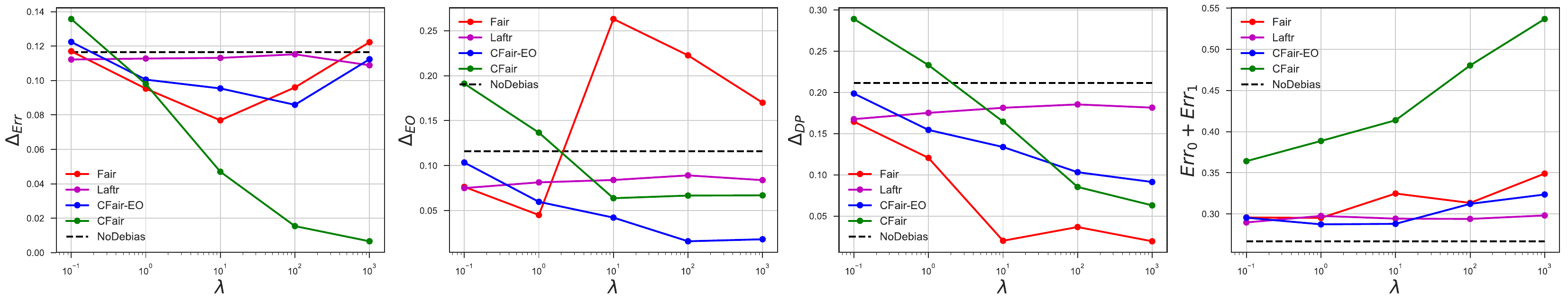}
    \caption{The error gap $\errgap$, equalized odds gap $\eogap$, demographic parity gap $\dpgap$ and joint error $\err_0 + \err_1$ on the Adult dataset with $\lambda\in\{0.1, 1.0, 10.0, 100.0, 1000.0\}$.}
    \label{fig:adult}
\end{figure}
\begin{figure}[htb]
    \centering
        \includegraphics[width=\linewidth]{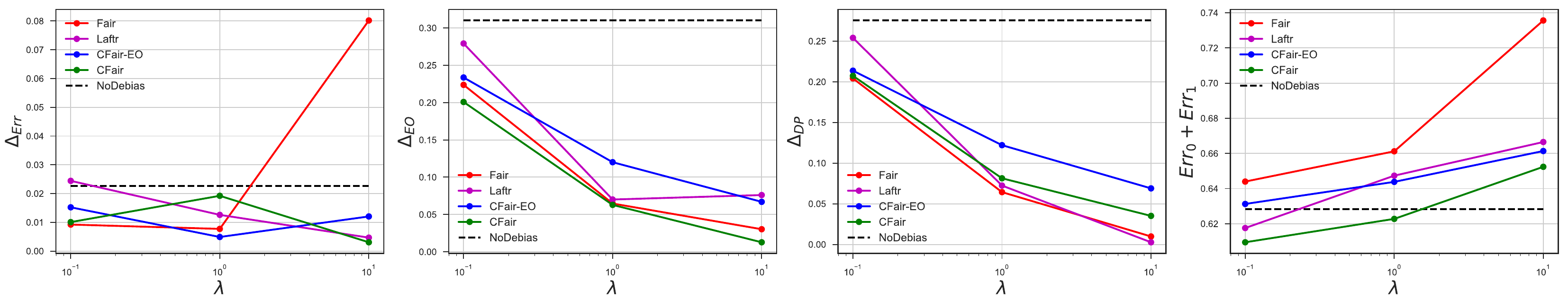}
    \caption{The error gap $\errgap$, equalized odds gap $\eogap$, demographic parity gap $\dpgap$ and joint error $\err_0 + \err_1$ on the COMPAS dataset with $\lambda\in\{0.1, 1.0, 10.0\}$.}
    \label{fig:compas}
\end{figure}

\paragraph{Adult} Due to the imbalance of $A$ in the Adult dataset, in the first plot of Figure~\ref{fig:adult} we can see that all the algorithms except \textsc{CFair} have a large error gap of around 0.12. As a comparison, observe that the error gap of \textsc{CFair} when $\lambda = 1e3$ almost reduces to 0, confirming the effectiveness of our algorithm in ensuring accuracy parity. From the second plot, we can verify that all the three methods, including \textsc{CFair}, \textsc{CFair-EO} and \textsc{Laftr} successfully ensure a small equalized odds gap, and they also decrease demographic parity gaps (the third plot). \textsc{Fair} is the most effective one in mitigating $\dpgap$ since its objective function directly optimizes for that goal. Note that from the second plot we can also confirm that \textsc{CFair-EO} is more effective than \textsc{Laftr} in reducing $\eogap$. The reason is two-fold. First, \textsc{CFair-EO} uses two distinct adversaries and hence it effectively competes with stronger adversaries than \textsc{Laftr}. Second, \textsc{CFair-EO} uses the cross-entropy loss instead of the $L_1$ loss for the adversary, and it is well-known that the maximum-likelihood estimator (equivalent to using the cross-entropy loss) is asymptotically efficient and optimal. On the other hand, since the Adult dataset is imbalanced (in terms of $Y$), using BER in the loss function of the target variable can thus to a large extent hurt the utility, and this is also confirmed from the last plot, where we show the joint error. 

\paragraph{COMPAS} The first three plots of Figure~\ref{fig:compas} once again verify that \textsc{CFair} successfully leads to reduced error gap, equalized odds gap and also demographic parity gap. These experimental results are consistent with our theoretical findings where we show that if the representations satisfy equalized odds, then its $\dpgap$ cannot exceed that of the optimal classifier, as shown by the horizontal dashed line in the third plot. In the fourth plot of Figure~\ref{fig:compas}, we can see that as we increase $\lambda$, all the fair representation learning algorithms sacrifice utility. However, in contrast to Figure~\ref{fig:adult}, here the proposed algorithm \textsc{CFair} has the smallest trade-off: this shows that \textsc{CFair} is particularly suited in the cases when the dataset is balanced and we would like to simultaneously ensure accuracy parity and equalized odds. As a comparison, while \textsc{CFair-EO} is still effective, it is slightly worse than \textsc{CFair} in terms of both ensuring parity and achieving small joint error. 

\section{Related Work}
\label{sec:related}
\paragraph{Algorithmic Fairness} In the literature of algorithmic fairness, two key notions of fairness have been extensively proposed and explored, i.e., \emph{group fairness}, including various variants defined in Section~\ref{sec:preliminary}, and \emph{individual fairness}, which means that similar individuals should be treated similarly. Due to the complexity in defining a distance metric to measure the similarity between individuals~\citep{dwork2012fairness}, most recent research focuses on designing efficient algorithms to achieve group fairness~\citep{zemel2013learning,zafar2015fairness,hardt2016equality,zafar2017fairness,madras2018learning,creager2019flexibly,madras2019fairness}. In particular, \citet{hardt2016equality} proposed a post-processing technique to achieve equalized odds by taking as input the model's prediction and the sensitive attribute. However, the post-processing technique requires access to the sensitive attribute during the inference phase, which is often not available in many real-world scenarios. Another line of work uses causal inference to define notions of causal fairness and to formulate procedures for achieving these notions \citep{zhang2018mitigating, wang2019equal, madras2019fairness, kilbertus2017avoiding, kusner2017counterfactual, nabi2018fair}. These approaches require making untestable assumptions. Of particular note is the observation in \cite{coston2019counterfactual} that fairness-adjustment procedures based on $Y$ in settings with treatment effects may lead to adverse outcomes. To apply our method in such settings, we would need to match conditional counterfactual distributions, which could be a direction of future research.

\paragraph{Theoretical Results on Fairness} Theoretical work studying the relationship between different kinds of fairness notions are abundant. Motivated by the controversy of the potential discriminatory bias in recidivism prediction instruments,~\citet{chouldechova2017fair} showed an intrinsic incompatibility between equalized odds and predictive rate parity. In the seminal work of \citet{kleinberg2016inherent}, the authors demonstrated that when the base rates differ between different groups, then a non-perfect classifier cannot simultaneously be statistically calibrated and satisfy equalized odds. In the context of cost-sensitive learning,~\citet{menon2018cost} show that if the optimal decision function is dissimilar to a fair decision, then the fairness constraint will not significantly harm the target utility. The idea of reducing fair classification to cost-sensitive learning is not new. \citet{agarwal2018reductions} explored the connection between fair classification and a sequence of cost-sensitive learning problems where each stage corresponds to solving a linear minimax saddle point problem. In a recent work~\citep{zhao2019inherent}, the authors proved a lower bound on the joint error across different groups when a classifier satisfies demographic parity. They also showed that when the decision functions are close between groups, demographic parity also implies accuracy parity. The theoretical results in this work establish a relationship between accuracy parity and equalized odds: these two fairness notions are fundamentally related by the base rate gap and the balanced error rate. Furthermore, we also show that for any predictor that satisfies equalized odds, the balanced error rate also serves as an upper bound on the joint error across demographic subgroups. 

\paragraph{Fair Representations} Through the lens of representation learning, recent advances in building fair algorithmic decision making systems focus on using adversarial methods to learn fair representations that also preserve sufficient information for the prediction task~\citep{edwards2015censoring,beutel2017data,zhang2018mitigating,madras2018learning,adel2019one}. In a nutshell, the key idea is to frame the problem of learning fair representations as a two-player game, where the data owner is competing against an adversary. The goal of the adversary is to infer the group attribute as much as possible while the goal of the data owner is to remove information related to the group attribute and simultaneously to preserve utility-related information for accurate prediction. Apart from using adversarial classifiers to enforce group fairness, other distance metrics have also been used to learn fair representations, e.g., the maximum mean discrepancy~\citep{louizos2015variational}, and the Wasserstein-1 distance~\citep{jiang2019wasserstein}. In contrast to these methods, in this paper we advocate for optimizing BER on both the target loss and adversary loss in order to simultaneously achieve accuracy parity and equalized odds. We also show that this leads to better utility-fairness trade-off for balanced datasets. 

\section{Conclusion and Future Work}
In this paper we propose a novel representation learning algorithm that aims to simultaneously ensure accuracy parity and equalized odds. The main idea underlying the design of our algorithm is to align the conditional distributions of representations (rather than marginal distributions) and use balanced error rate (i.e., the conditional error) on both the target variable and the sensitive attribute. Theoretically, we prove how these two concepts together help to ensure accuracy parity and equalized odds without impacting demographic parity, and we also show how these two can be used to give a guarantee on the joint error across different demographic subgroups. Empirically, we demonstrate on two real-world experiments that the proposed algorithm effectively leads to the desired notions of fairness, and it also leads to better utility-fairness trade-off on balanced datasets.

\paragraph{Calibration and Utility} Our work takes a step towards better understanding the relationships between different notions of fairness and their corresponding trade-off with utility. In some scenarios, e.g., the COMPAS tool, it is desired to have a decision making system that is also well calibrated. While it is well-known that statistical calibration is not compatible with demographic parity or equalized odds, from a theoretical standpoint it is still not clear whether calibration will harm utility and if so, what is the fundamental limit of a calibrated tool on utility. 

\paragraph{Fairness and Privacy} Future work could also investigate how to make use of the close relationship between privacy and group fairness. At a colloquial level, fairness constraints require a predictor to be (to some extent) agnostic about the group membership attribute. The membership query attack in privacy asks the same question -- is it possible to guarantee that even an optimal adversary cannot steal personal information through inference attacks. Prior work~\citep{dwork2012fairness} has described the connection between the notion of individual fairness and differential privacy. Hence it would be interesting to exploit techniques developed in the literature of privacy to develop more efficient fairness-aware learning algorithms. On the other hand, results obtained in the algorithmic fairness literature could also potentially lead to better privacy-preserving machine learning algorithms~\citep{zhao2019adversarial}. 

\subsubsection*{Acknowledgments}
HZ and GG would like to acknowledge support from the DARPA XAI project, contract \#FA87501720152 and an Nvidia GPU grant. HZ would also like to thank Richard Zemel, Toniann Pitassi, David Madras and Elliot Creager for helpful discussions during HZ's visit to the Vector Institute. 

\newpage
\bibliography{reference}
\bibliographystyle{iclr2020_conference}

\newpage
\appendix
\section{Proofs} \label{sec:proofs}
\subsection{Proof of Proposition~\ref{prop:eo}}
\propeo*
\begin{proof}
To prove this proposition, we first show that for any pair of distributions $\dist$, $\dist'$ over $\zzspace$ and any hypothesis $h:\zzspace\to\{0, 1\}$, $\dtv(h_\sharp\dist, h_\sharp\dist')\leq \dtv(\dist, \dist')$. Note that since $h$ is a hypothesis, there are only two events in the induced probability space, i.e., $h(\cdot) = 0$ or $h(\cdot) = 1$. Hence by definition of the induced (pushforward) distribution, we have:
\begin{align*}
    \dtv(h_\sharp\dist, h_\sharp\dist') &= \max_{E = h^{-1}(0),\text{ or } E = h^{-1}(1)}|\dist(E) - \dist'(E)| \\
    &\leq \sup_{E\text{ is measurable subset of }\zzspace} |\dist(E) - \dist'(E)| \\
    &= \dtv(\dist, \dist').
\end{align*}
Apply the above inequality twice for $y \in\{0, 1\}$:
\begin{equation*}
    0 \leq \dtv((h\circ g)_\sharp\dist_0^y, (h\circ g)_\sharp\dist_1^y) \leq \dtv(g_\sharp\dist_0^y, g_\sharp\dist_1^y) = 0,
\end{equation*}
meaning
\begin{equation*}
    \dtv((h\circ g)_\sharp\dist_0^y, (h\circ g)_\sharp\dist_1^y) = 0,
\end{equation*}
which further implies that $h(g(X))$ is independent of $A$ given $Y = y$ since $h(g(X))$ is binary. 
\end{proof}

\subsection{Proof of Proposition~\ref{lemma:dtvdbr}}
\dtvdbr*
\begin{proof}
Let $\Ypred = (h\circ g)(X)$ and note that $\Ypred$ is binary, we have
\begin{align*}
    \dtv((h\circ g)_\sharp\dist_0, (h\circ g)_\sharp\dist_1) &= \frac{1}{2}\left(|\dist_0(\Ypred = 0) - \dist_1(\Ypred = 0)| + |\dist_0(\Ypred = 1) - \dist_1(\Ypred = 1)|\right).
\end{align*}
Now, by Proposition~\ref{prop:eo}, if $\dtv(g_\sharp\dist_0^y, g_\sharp\dist_1^y) = 0$, $\forall y\in\{0, 1\}$, it follows that $\dtv((h\circ g)_\sharp\dist_0^y, (h\circ g)_\sharp\dist_1^y) = 0$, $\forall y\in\{0, 1\}$ as well. Applying Lemma~\ref{lemma:dpgap}, we know that $\forall y\in\{0, 1\}$,
\begin{equation*}
|\dist_0(\Ypred = y) - \dist_1(\Ypred = y)| \leq |\dist_0(Y = 0) - \dist_1(Y = 0)|\cdot \big(\dist^0(\Ypred=y) + \dist^1(\Ypred = y)\big).
\end{equation*}
Hence, 
\begin{align*}
    \dtv(&(h\circ g)_\sharp\dist_0, (h\circ g)_\sharp\dist_1) = \frac{1}{2}\left(|\dist_0(\Ypred = 0) - \dist_1(\Ypred = 0)| + |\dist_0(\Ypred = 1) - \dist_1(\Ypred = 1)|\right) \\
    &\leq \frac{|\dist_0(Y = 0) - \dist_1(Y = 0)|}{2}\left(\big(\dist^0(\Ypred=0) + \dist^1(\Ypred = 0)\big) + \big(\dist^0(\Ypred=1) + \dist^1(\Ypred = 1)\big)\right)\\
    &= \frac{|\dist_0(Y = 0) - \dist_1(Y = 0)|}{2}\left(\big(\dist^0(\Ypred=0) + \dist^0(\Ypred=1)\big) + \big(\dist^1(\Ypred = 0) + \dist^1(\Ypred = 1)\big)\right)\\
    &= \frac{|\dist_0(Y = 0) - \dist_1(Y = 0)|}{2}\cdot 2 \\
    &= |\dist_0(Y = 0) - \dist_1(Y = 0)| \\
    &= \dbr(\dist_0,\dist_1).\qedhere
\end{align*}

\end{proof}

\subsection{Proof of Lemma~\ref{lemma:dpgap}}
Recall that we define $\gamma_a\defeq \dist_a(Y = 0), \forall a\in\{0, 1\}$.
\lemmadpgap*
\begin{proof}
To bound $|\dist_0(\Ypred = y) - \dist_1(\Ypred = y)|$, for $y\in\{0, 1\}$, by the law of total probability, we have:
\begin{align*}
    |\dist_0&(\Ypred = y) - \dist_1(\Ypred = y)| =  \\
    & = \big|\big(\dist_0^0(\Ypred = y)\dist_0(Y = 0) + \dist_0^1(\Ypred = y)\dist_0(Y = 1)\big) - \big(\dist_1^0(\Ypred = y)\dist_1(Y = 0) + \dist_1^1(\Ypred = y)\dist_1(Y = 1)\big)\big| \\
    &\leq \big|\gamma_0\dist_0^0(\Ypred = y) - \gamma_1\dist_1^0(\Ypred = y)\big| + \big|(1-\gamma_0)\dist_0^1(\Ypred = y) - (1-\gamma_1)\dist_1^1(\Ypred = y)\big|, \\ \intertext{where the above inequality is due to the triangular inequality. Now apply Proposition~\ref{prop:eo}, we know that $\Ypred$ satisfies equalized odds, so we have $\dist_0^0(\Ypred = y) = \dist_1^0(\Ypred = y) = \dist^0(\Ypred = y)$ and $\dist_0^1(\Ypred = y) = \dist_1^1(\Ypred = y) = \dist^1(\Ypred = y)$, leading to:} 
    &= \big|\gamma_0 - \gamma_1\big|\cdot \dist^0(\Ypred = y) + \big|(1 - \gamma_0) - (1 - \gamma_1)\big|\cdot \dist^1(\Ypred = y) \\
    &= |\gamma_0 - \gamma_1|\cdot\big(\dist^0(\Ypred = y) + \dist^1(\Ypred = y)\big),
\end{align*}
which completes the proof.
\end{proof}

\subsection{Proof of Theorem~\ref{thm:dpgap}}
\thmdpgap*
\begin{proof}
To bound $\dpgap(\Ypred)$, realize that $|\dist_0(\Ypred = 0) - \dist_1(\Ypred = 0)| = |\dist_0(\Ypred = 1) - \dist_1(\Ypred = 1)|$, so we can rewrite the DP gap as follows:
\begin{equation*}
    \dpgap(\Ypred) = \frac{1}{2}\left(\big|\dist_0(\Ypred = 0) - \dist_1(\Ypred = 0)| + |\dist_0(\Ypred = 1) - \dist_1(\Ypred = 1)\big|\right).
\end{equation*}
Now apply Lemma~\ref{lemma:dpgap} twice for $y = 0$ and $y = 1$, we have:
\begin{align*}
|\dist_0(\Ypred = 0) - \dist_1(\Ypred = 0)| &\leq |\gamma_0 - \gamma_1|\cdot\big(\dist^0(\Ypred = 0) + \dist^1(\Ypred = 0)\big) \\
|\dist_0(\Ypred = 1) - \dist_1(\Ypred = 1)| &\leq |\gamma_0 - \gamma_1|\cdot\big(\dist^0(\Ypred = 1) + \dist^1(\Ypred = 1)\big).
\end{align*}
Taking sum of the above two inequalities yields
\begin{align*}
    |\dist_0(\Ypred = 0) - \dist_1(\Ypred = 0)| &+ |\dist_0(\Ypred = 1) - \dist_1(\Ypred = 1)| \\
    &\leq |\gamma_0 - \gamma_1|\left(\big(\dist^0(\Ypred = 0) + \dist^1(\Ypred = 0)\big) + \big(\dist^0(\Ypred = 1) + \dist^1(\Ypred = 1)\big)\right)\\
    &= |\gamma_0 - \gamma_1|\left(\big(\dist^0(\Ypred = 0) + \dist^0(\Ypred = 1)\big) + \big(\dist^1(\Ypred = 0) + \dist^1(\Ypred = 1)\big)\right) \\
    &= 2\big|\gamma_0 - \gamma_1\big|.
\end{align*}
Combining all the inequalities above, we know that
\begin{align*}
    \dpgap(\Ypred) &= \frac{1}{2}\left(\big|\dist_0(\Ypred = 0) - \dist_1(\Ypred = 0)| + |\dist_0(\Ypred = 1) - \dist_1(\Ypred = 1)\big|\right) \\
    &\leq \big|\gamma_0 - \gamma_1\big| \\
    &= |\dist_0(Y = 0) - \dist_1(Y = 0)| \\
    &= |\dist_0(Y = 1) - \dist_1(Y = 1)| \\
    &= \dbr(\dist_0, \dist_1) = \dpgap(Y),
\end{align*}
completing the proof.
\end{proof}

\subsection{Proof of Theorem~\ref{thm:jointerror}}
\thmjointerror*
\begin{proof}
First, by the law of total probability, we have:
\begin{align*}
    \err_{\dist_0}(\Ypred) &+ \err_{\dist_1}(\Ypred) =  \dist_0(Y\neq \Ypred) + \dist_1(Y\neq \Ypred) \\
    &= \dist_0^1(\Ypred = 0)\dist_0(Y = 1) + \dist_0^0(\Ypred = 1)\dist_0(Y = 0) 
     + \dist_1^1(\Ypred = 0)\dist_1(Y = 1) + \dist_1^0(\Ypred = 1)\dist_1(Y = 0) \\
    \intertext{Again, by Proposition~\ref{prop:eo}, the classifier $\Ypred = (h\circ g)(X)$ satisfies equalized odds, so we have $\dist_0^1(\Ypred = 0) = \dist^1(\Ypred = 0)$,  $\dist_0^0(\Ypred = 1) = \dist^0(\Ypred = 1)$, $\dist_1^1(\Ypred = 0) = \dist^1(\Ypred = 0)$ and $\dist_1^0(\Ypred = 1) = \dist^0(\Ypred = 1)$:}
    &= \dist^1(\Ypred = 0)\dist_0(Y = 1) + \dist^0(\Ypred = 1)\dist_0(Y = 0) 
     + \dist^1(\Ypred = 0)\dist_1(Y = 1) + \dist^0(\Ypred = 1)\dist_1(Y = 0) \\
    &= \dist^1(\Ypred = 0)\cdot \big(\dist_0(Y = 1) + \dist_1(Y = 1)\big) + \dist^0(\Ypred = 1)\cdot\big(\dist_0(Y = 0) + \dist_1(Y = 0)\big) \\
    &\leq 2\dist^1(\Ypred = 0) + 2\dist^0(\Ypred = 1) \\
    &= 2\ber{\dist}{\Ypred}{Y},
\end{align*}
which completes the proof. 
\end{proof}

\subsection{Proof of Theorem~\ref{thm:relationship}}
\thmerrorgap*
Before we give the proof of Theorem~\ref{thm:relationship}, we first prove the following two lemmas that will be used in the following proof.
\begin{lemma}
\label{lemma:first}
Define $\gamma_a\defeq \dist_a(Y = 0), \forall a\in\{0, 1\}$, then $|\gamma_0 \dist_0^0(\Ypred = 1) - \gamma_1\dist_1^0(\Ypred = 1)| \leq |\gamma_0 - \gamma_1|\cdot \dist^0(\Ypred = 1) + \gamma_0 \dist^0(A = 1)\eogap(\Ypred) + \gamma_1 \dist^0(A = 0)\eogap(\Ypred)$.
\end{lemma}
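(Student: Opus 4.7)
The bound is linear in the quantity $\dist^0(\Ypred=1)$ on the right-hand side, so the natural strategy is an add-and-subtract around this quantity. Concretely, the plan is to write
\[
\gamma_0 \dist_0^0(\Ypred=1) - \gamma_1 \dist_1^0(\Ypred=1) = \gamma_0\bigl(\dist_0^0(\Ypred=1) - \dist^0(\Ypred=1)\bigr) + (\gamma_0 - \gamma_1)\dist^0(\Ypred=1) + \gamma_1\bigl(\dist^0(\Ypred=1) - \dist_1^0(\Ypred=1)\bigr),
\]
and then apply the triangle inequality. The middle term matches the $|\gamma_0-\gamma_1|\cdot\dist^0(\Ypred=1)$ summand on the right-hand side of the claim exactly, so the remaining task is to bound each of the two outer terms by $\gamma_a \dist^0(A=1-a)\eogap(\Ypred)$.

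\textbf{Bounding the outer terms.} For this step I would use the law of total probability, conditioning on $A$ inside the event $\{Y=0\}$:
\[
\dist^0(\Ypred=1) = \dist^0(A=0)\,\dist_0^0(\Ypred=1) + \dist^0(A=1)\,\dist_1^0(\Ypred=1).
\]
Subtracting $\dist_0^0(\Ypred=1)$ from both sides and using $\dist^0(A=0)+\dist^0(A=1)=1$ gives
\[
\dist^0(\Ypred=1) - \dist_0^0(\Ypred=1) = \dist^0(A=1)\bigl(\dist_1^0(\Ypred=1) - \dist_0^0(\Ypred=1)\bigr),
\]
and symmetrically with the roles of $0$ and $1$ swapped. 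Taking absolute values and invoking the definition of $\eogap(\Ypred)$, which in particular satisfies $|\dist_0^0(\Ypred=1) - \dist_1^0(\Ypred=1)| \le \eogap(\Ypred)$, yields
\[
|\dist^0(\Ypred=1) - \dist_0^0(\Ypred=1)| \le \dist^0(A=1)\,\eogap(\Ypred), \qquad |\dist^0(\Ypred=1) - \dist_1^0(\Ypred=1)| \le \dist^0(A=0)\,\eogap(\Ypred).
\]

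\textbf{Combining.} Plugging these two bounds back into the triangle-inequality expression produces the three summands $\gamma_0 \dist^0(A=1)\eogap(\Ypred)$, $|\gamma_0-\gamma_1|\dist^0(\Ypred=1)$, and $\gamma_1 \dist^0(A=0)\eogap(\Ypred)$, which is exactly the claimed bound. The only real subtlety is choosing the right pivot for the add-and-subtract step: using the $Y$-conditional marginal $\dist^0(\Ypred=1)$ (rather than, say, either of the group-conditional quantities) is what allows the $\eogap$ term to surface naturally via the law of total probability. Once the pivot is chosen, everything else is mechanical, so I do not expect any serious obstacle.
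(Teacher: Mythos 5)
Your proof is correct and follows essentially the same route as the paper's: the same add-and-subtract around the pivot $\dist^0(\Ypred=1)$ (the paper phrases it via the reverse triangle inequality, but the underlying decomposition is identical), the same law-of-total-probability identity $\dist^0(\Ypred=1)-\dist_a^0(\Ypred=1) = \dist^0(A=1-a)\bigl(\dist_{1-a}^0(\Ypred=1)-\dist_a^0(\Ypred=1)\bigr)$, and the same final appeal to the definition of $\eogap(\Ypred)$. No gaps.
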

\begin{proof}
In order to prove the upper bound in the lemma, it suffices if we could give the desired upper bound for the following term
\begin{align*}
    & \left||\gamma_0 \dist_0^0(\Ypred = 1) - \gamma_1\dist_1^0(\Ypred = 1)| - |(\gamma_0 - \gamma_1) \dist^0(\Ypred = 1)| \right|    \\
    &\leq  \left|\left(\gamma_0 \dist_0^0(\Ypred = 1) - \gamma_1\dist_1^0(\Ypred = 1)\right) - (\gamma_0 - \gamma_1) \dist^0(\Ypred = 1) \right|    \\
    &= \left|\gamma_0 (\dist_0^0(\Ypred = 1) - \dist^0(\Ypred = 1)) - \gamma_1 (\dist_1^0(\Ypred = 1) - \dist^0(\Ypred = 1))\right|,
\end{align*}
following which we will have:
\begin{align*}
    |\gamma_0 \dist_0^0(\Ypred = 1) - \gamma_1\dist_1^0(\Ypred = 1)| & \leq |(\gamma_0 - \gamma_1) \dist^0(\Ypred = 1)| \\
    & + \left|\gamma_0 (\dist_0^0(\Ypred = 1) - \dist^0(\Ypred = 1)) - \gamma_1 (\dist_1^0(\Ypred = 1) - \dist^0(\Ypred = 1))\right|,
\end{align*}
and an application of the Bayes formula could finish the proof. To do so, let us first simplify $\dist_0^0(\Ypred = 1) - \dist^0(\Ypred = 1)$. Applying the Bayes's formula, we know that:
\begin{align*}
    \dist_0^0(\Ypred = 1) - \dist^0(\Ypred = 1) &= \dist_0^0(\Ypred = 1) - \big(\dist_0^0(\Ypred = 1)\dist^0(A = 0) + \dist_1^0(\Ypred = 1)\dist^0(A = 1)\big) \\
    &= \big(\dist_0^0(\Ypred = 1) - \dist_0^0(\Ypred = 1)\dist^0(A = 0)\big) - \dist_1^0(\Ypred = 1)\dist^0(A = 1) \\
    &= \dist^0(A = 1)\big(\dist_0^0(\Ypred = 1) - \dist_1^0(\Ypred = 1)\big).
\end{align*}
Similarly, for the second term $\dist_1^0(\Ypred = 1) - \dist^0(\Ypred = 1)$, we can show that:
\begin{equation*}
    \dist_1^0(\Ypred = 1) - \dist^0(\Ypred = 1) = \dist^0(A = 0)\big(\dist_1^0(\Ypred = 1) - \dist_0^0(\Ypred = 1)\big).
\end{equation*}
Plug these two identities into above, we can continue the analysis with
\begin{align*}
     & \left|\gamma_0 (\dist_0^0(\Ypred = 1) - \dist^0(\Ypred = 1)) - \gamma_1 (\dist_1^0(\Ypred = 1) - \dist^0(\Ypred = 1))\right| \\
    &= \left|\gamma_0 \dist^0(A = 1)(\dist_0^0(\Ypred = 1) - \dist_1^0(\Ypred = 1)) - \gamma_1 \dist^0(A = 0)(\dist_1^0(\Ypred = 1) - \dist_0^0(\Ypred = 1))\right| \\
    &\leq \left|\gamma_0 \dist^0(A = 1)(\dist_0^0(\Ypred = 1) - \dist_1^0(\Ypred = 1))\right| + \left|\gamma_1 \dist^0(A = 0)(\dist_1^0(\Ypred = 1) - \dist_0^0(\Ypred = 1))\right| \\
    & \leq \gamma_0 \dist^0(A = 1)\eogap(\Ypred) + \gamma_1 \dist^0(A = 0)\eogap(\Ypred).
\end{align*}
The first inequality holds by triangular inequality and the second one holds by the definition of equalized odds gap. 
\end{proof}

\begin{lemma}
\label{lemma:second}
Define $\gamma_a\defeq \dist_a(Y = 0), \forall a\in\{0, 1\}$, then $|(1-\gamma_0)\dist_0^1(\Ypred = 0) - (1-\gamma_1)\dist_1^1(\Ypred = 0)| \leq |\gamma_0 - \gamma_1|\cdot\dist^1(\Ypred = 0) + (1 - \gamma_0)\dist^1(A = 1)\eogap(\Ypred)  + (1 - \gamma_1)\dist^1(A = 0)\eogap(\Ypred)$.
\end{lemma}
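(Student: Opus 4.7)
The plan is to prove Lemma~\ref{lemma:second} by mirroring exactly the argument used for Lemma~\ref{lemma:first}, just with the substitutions $\gamma_a \mapsto 1 - \gamma_a$, $\dist_a^0 \mapsto \dist_a^1$, and $\{\Ypred = 1\} \mapsto \{\Ypred = 0\}$. The quantities $(1-\gamma_a) = \dist_a(Y=1)$ and $\dist_a^1(\Ypred=0)$ play the roles that $\gamma_a$ and $\dist_a^0(\Ypred=1)$ played in Lemma~\ref{lemma:first}, and the equalized-odds gap $\eogap(\Ypred)$ bounds the difference $|\dist_0^1(\Ypred=0) - \dist_1^1(\Ypred=0)|$ because $\dist_a^1(\Ypred=0) = 1 - \dist_a^1(\Ypred=1)$, so $|\dist_0^1(\Ypred=0) - \dist_1^1(\Ypred=0)| = |\dist_0^1(\Ypred=1) - \dist_1^1(\Ypred=1)| \leq \eogap(\Ypred)$ by definition.

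Concretely, I would first add and subtract $(\gamma_0 - \gamma_1)\dist^1(\Ypred = 0)$ and apply the reverse triangle inequality to get
\[
\bigl|(1-\gamma_0)\dist_0^1(\Ypred = 0) - (1-\gamma_1)\dist_1^1(\Ypred = 0)\bigr| \leq |\gamma_0 - \gamma_1|\cdot \dist^1(\Ypred = 0) + R,
\]
where $R = \bigl|(1-\gamma_0)(\dist_0^1(\Ypred = 0) - \dist^1(\Ypred = 0)) - (1-\gamma_1)(\dist_1^1(\Ypred = 0) - \dist^1(\Ypred = 0))\bigr|$. (The sign of the extra term flips compared to Lemma~\ref{lemma:first} in one place, but it does not matter since the whole expression is inside an absolute value.)

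Next I would simplify the two inner differences using Bayes' rule. Since $\dist^1(\Ypred = 0) = \dist^1(A=0)\dist_0^1(\Ypred=0) + \dist^1(A=1)\dist_1^1(\Ypred=0)$, rearranging yields
\[
\dist_0^1(\Ypred=0) - \dist^1(\Ypred=0) = \dist^1(A=1)\bigl(\dist_0^1(\Ypred=0) - \dist_1^1(\Ypred=0)\bigr),
\]
\[
\dist_1^1(\Ypred=0) - \dist^1(\Ypred=0) = \dist^1(A=0)\bigl(\dist_1^1(\Ypred=0) - \dist_0^1(\Ypred=0)\bigr).
\]
Substituting, applying the triangle inequality to split $R$ into two terms, and bounding each factor $|\dist_0^1(\Ypred=0) - \dist_1^1(\Ypred=0)|$ by $\eogap(\Ypred)$ produces exactly $(1-\gamma_0)\dist^1(A=1)\eogap(\Ypred) + (1-\gamma_1)\dist^1(A=0)\eogap(\Ypred)$, closing the proof.

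This is essentially a symmetric rewrite of the previous lemma, so I do not anticipate a real obstacle; the only place to be careful is in tracking which base-rate complement $(1-\gamma_a)$ pairs with which conditional $\dist^1(A=\cdot)$ after the Bayes expansion, and verifying that the definition $\eogap(\Ypred) = \max_y |\dist_0^y(\Ypred=1) - \dist_1^y(\Ypred=1)|$ upper-bounds the $\Ypred=0$ differences through the identity $\dist_a^1(\Ypred=0) = 1 - \dist_a^1(\Ypred=1)$.
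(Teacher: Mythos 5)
Your proof is correct and is exactly the symmetric argument the paper intends (the paper simply states that Lemma~\ref{lemma:second} is symmetric to Lemma~\ref{lemma:first} and omits the details): the same add-and-subtract decomposition, the same Bayes-rule simplification of $\dist_a^1(\Ypred=0)-\dist^1(\Ypred=0)$, and the same triangle-inequality split. You also correctly handle the one point the paper glosses over, namely that $\eogap(\Ypred)$ is defined via the events $\{\Ypred=1\}$ but bounds $|\dist_0^1(\Ypred=0)-\dist_1^1(\Ypred=0)|$ because $\Ypred$ is binary.
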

\begin{proof}
The proof of this lemma is symmetric to the previous one, so we omit it here. 
\end{proof}
Now we are ready to prove Theorem~\ref{thm:relationship}:
\begin{proof}[Proof of Theorem~\ref{thm:relationship}]
First, by the law of total probability, it is easy to verify that following identity holds for $a\in\{0, 1\}$:
\begin{align*}
    \dist_a(\Ypred\neq Y) &= \dist_a(Y = 1, \Ypred = 0) + \dist_a(Y = 0, \Ypred = 1) \\
    &= (1 - \gamma_a) \dist_a^1(\Ypred = 0) + \gamma_a \dist_a^0(\Ypred = 1).
\end{align*}
Using this identity, to bound the error gap, we have:
\begin{align*}
    |\dist_0(Y\neq \Ypred) &- \dist_1(Y\neq \Ypred)| = \big|((1 - \gamma_0) \dist_0^1(\Ypred = 0) + \gamma_0 \dist_0^0(\Ypred = 1)) - ((1-\gamma_1)\dist_1^1(\Ypred = 0) + \gamma_1 \dist_1^0(\Ypred = 1))\big| \\
    &\leq \big|\gamma_0 \dist_0^0(\Ypred = 1) - \gamma_1 \dist_1^0(\Ypred = 1)\big| + \big|(1-\gamma_0)\dist_0^1(\Ypred = 0) - (1-\gamma_1)\dist_1^1(\Ypred = 0)\big|.
\end{align*}
Invoke Lemma~\ref{lemma:first} and Lemma~\ref{lemma:second} to bound the above two terms:
\begin{align*}
    |\dist_0(Y\neq \Ypred) &- \dist_1(Y\neq \Ypred)|  \\
    &\leq \big|\gamma_0 \dist_0^0(\Ypred = 1) - \gamma_1 \dist_1^0(\Ypred = 1)\big| + \big|(1-\gamma_0)\dist_0^1(\Ypred = 0) - (1-\gamma_1)\dist_1^1(\Ypred = 0)\big| \\
    &\leq \gamma_0 \dist^0(A = 1)\eogap(\Ypred) + \gamma_1 \dist^0(A = 0)\eogap(\Ypred) \\
    & + (1 - \gamma_0)\dist^1(A = 1)\eogap(\Ypred)  + (1 - \gamma_1)\dist^1(A = 0)\eogap(\Ypred) \\
    & + \big|\gamma_0 - \gamma_1\big| \dist^0(\Ypred = 1) + \big|\gamma_0 - \gamma_1\big| \dist^1(\Ypred = 0), \\
    \intertext{Realize that both $\gamma_0, \gamma_1\in[0, 1]$, we have:}
    & \leq \dist^0(A = 1)\eogap(\Ypred) + \dist^0(A = 0)\eogap(\Ypred) + \dist^1(A = 1)\eogap(\Ypred) + \dist^1(A = 0)\eogap(\Ypred) \\
    & + \big|\gamma_0 - \gamma_1\big| \dist^0(\Ypred = 1) + \big|\gamma_0 - \gamma_1\big| \dist^1(\Ypred = 0) \\
    & = 2\eogap(\Ypred) + \big|\gamma_0 - \gamma_1\big| \dist^0(\Ypred = 1) + \big|\gamma_0 - \gamma_1\big| \dist^1(\Ypred = 0) \\
    & = 2\eogap(\Ypred) + \dbr(\dist_0, \dist_1)\cdot \ber{\dist}{\Ypred}{Y},
\end{align*}
which completes the proof.
\end{proof}

We also provide the proof of Corollary~\ref{cor:eacherror}:
\eacherror*
\begin{proof}
We first invoke Theorem~\ref{thm:relationship}, if $\Ypred$ satisfies equalized odds, then $\eogap(\Ypred) = 0$, which implies:
\begin{equation*}
    \errgap(\Ypred) = \big|\err_{\dist_0}(\Ypred) - \err_{\dist_1}(\Ypred)\big| \leq\dbr(\dist_0, \dist_1)\cdot\ber{\dist}{\Ypred}{Y}.
\end{equation*}
On the other hand, by Theorem~\ref{thm:jointerror}, we know that
\begin{equation*}
    \err_{\dist_0}(\Ypred) + \err_{\dist_1}(\Ypred)\leq 2\ber{\dist}{\Ypred}{Y}.
\end{equation*}
Combine the above two inequalities and recall that $\max\{a, b\} = (|a + b| + |a - b|) / 2, \forall a, b\in\RR$, yielding:
\begin{align*}
    \max\{\err_{\dist_0}(\Ypred), \err_{\dist_1}(\Ypred)\} &= \frac{|\err_{\dist_0}(\Ypred) - \err_{\dist_1}(\Ypred)| + |\err_{\dist_0}(\Ypred) + \err_{\dist_1}(\Ypred)|}{2} \\
    &\leq \frac{\dbr(\dist_0, \dist_1)\cdot\ber{\dist}{\Ypred}{Y} + 2\ber{\dist}{\Ypred}{Y}}{2} \\
    &= \dbr(\dist_0, \dist_1)\cdot\ber{\dist}{\Ypred}{Y} / 2 + \ber{\dist}{\Ypred}{Y}, 
\end{align*}
completing the proof.
\end{proof}

\section{Experimental Details}
\subsection{The Adult Experiment}
For the baseline network \textsc{NoDebias}, we implement a three-layer neural network with ReLU as the hidden activation function and logistic regression as the target output function. The input layer contains 114 units, and the hidden layer contains 60 hidden units. The output layer only contain one unit, whose output is interpreted as the probability of $\dist(\Ypred = 1\mid X = x)$. 

For the adversary in \textsc{Fair} and \textsc{Laftr}, again, we use a three-layer feed-forward network. Specifically, the input layer of the adversary is the hidden representations of the baseline network that contains 60 units. The hidden layer of the adversary network contains 50 units, with ReLU activation. Finally, the output of the adversary also contains one unit, representing the adversary's inference probability $\dist(\Apred = 1\mid Z = z)$. The network structure of the adversaries in both \textsc{CFair} and \textsc{CFair-EO} are exactly the same as the one used in \textsc{Fair} and \textsc{Laftr}, except that there are two adversaries, one for $\dist^0(\Apred = 1\mid Z = z)$ and one for $\dist^1(\Apred = 1\mid Z = z)$. 

The hyperparameters used in the experiment are listed in Table~\ref{tab:adult}.
\begin{table}[htb]
    \centering
    \caption{Hyperparameters used in the Adult experiment.}
    \label{tab:adult}
    \begin{tabular}{ll}\toprule
    Optimization Algorithm & AdaDelta \\
    Learning Rate & 1.0 \\
    Batch Size & 512 \\
    Training Epochs $\lambda\in\{0.1, 1.0, 10.0, 100.0, 1000.0\}$ & 100 \\\bottomrule
    \end{tabular}
\end{table}

\subsection{The COMPAS Experiment}
Again, for the baseline network \textsc{NoDebias}, we implement a three-layer neural network with ReLU as the hidden activation function and logistic regression as the target output function. The input layer contains 11 units, and the hidden layer contains 10 hidden units. The output layer only contain one unit, whose output is interpreted as the probability of $\dist(\Ypred = 1\mid X = x)$. 

For the adversary in \textsc{Fair} and \textsc{Laftr}, again, we use a three-layer feed-forward network. Specifically, the input layer of the adversary is the hidden representations of the baseline network that contains 60 units. The hidden layer of the adversary network contains 10 units, with ReLU activation. Finally, the output of the adversary also contains one unit, representing the adversary's inference probability $\dist(\Apred = 1\mid Z = z)$. The network structure of the adversaries in both \textsc{CFair} and \textsc{CFair-EO} are exactly the same as the one used in \textsc{Fair} and \textsc{Laftr}, except that there are two adversaries, one for $\dist^0(\Apred = 1\mid Z = z)$ and one for $\dist^1(\Apred = 1\mid Z = z)$. 

The hyperparameters used in the experiment are listed in Table~\ref{tab:compas}.
\begin{table}[!htb]
    \centering
    \caption{Hyperparameters used in the COMPAS experiment.}
    \label{tab:compas}
    \begin{tabular}{ll}\toprule
    Optimization Algorithm & AdaDelta \\
    Learning Rate & 1.0 \\
    Batch Size & 512 \\
    Training Epochs $\lambda \in \{0.1, 1.0\}$ & $20$ \\
    Training Epochs $\lambda = 10.0$ & 15 \\\bottomrule
    \end{tabular}
\end{table}
\end{document}